\documentclass[sigconf]{acmart}

\copyrightyear{2026}
\acmYear{2026}
\setcopyright{cc}
\setcctype{by}
\acmConference[WWW '26]{Proceedings of the ACM Web Conference 2026}{April 13--17, 2026}{Dubai, United Arab Emirates}
\acmBooktitle{Proceedings of the ACM Web Conference 2026 (WWW '26), April 13--17, 2026, Dubai, United Arab Emirates}
\acmPrice{}
\acmDOI{10.1145/3774904.3792244}
\acmISBN{979-8-4007-2307-0/2026/04}

\usepackage{hyperref}
\usepackage{subfig}
\usepackage{url}
\usepackage{tikz}
\usepackage{pgf-umlsd}
\usepackage{multirow}
\usepackage{tcolorbox}
\usetikzlibrary{positioning}
\usepackage{arydshln}
\usepackage[shortlabels]{enumitem} 
\usepackage{appendix} 
\usepackage[table,xcdraw]{xcolor}
\usepackage{graphicx}
\usepackage{xcolor}
\usepackage{booktabs}
\usepackage{xspace}
\usepackage{natbib}
\usepackage{makecell}
\usepackage{enumitem}

\theoremstyle{plain}

\usepackage[ruled,linesnumbered]{algorithm2e}

\newtheorem{lemma}{Lemma}[section]
\newtheorem{theorem}{Theorem}[section]

\begin{document}

\title{Traceable Latent Variable Discovery Based on \mbox{Multi-Agent Collaboration}}

\author{Huaming Du}
\affiliation{%
  \institution{Southwestern University of Finance and Economics}
  \city{Chengdu}
  \state{Sichuan}
  \country{China}
}

\author{Tao Hu}
\affiliation{%
  \institution{Southwestern University of Finance and Economics}
  \city{Chengdu}
  \state{Sichuan}
  \country{China}}

\author{Yijie Huang}
\affiliation{%
  \institution{Southwestern University of Finance and Economics}
  \city{Chengdu}
  \state{Sichuan}
  \country{China}}

\author{Yu Zhao}
\authornote{Corresponding author}
\affiliation{%
 \institution{Southwestern University of Finance and Economics}
  \city{Chengdu}
  \state{Sichuan}
  \country{China}}
\author{Guisong Liu}
\affiliation{%
 \institution{Southwestern University of Finance and Economics}
  \city{Chengdu}
  \state{Sichuan}
  \country{China}}
\author{Tao Gu}
\affiliation{%
 \institution{Southwestern University of Finance and Economics}
  \city{Chengdu}
  \state{Sichuan}
  \country{China}}
\author{Gang Kou}
\affiliation{%
  \institution{Hunan University of Technology and Business}
  \institution{Xiangjiang Laboratory}
  \state{Hunan}
  \country{China}}

\author{Carl Yang}
\authornotemark[1]
\affiliation{%
  \institution{Emory University}
  \city{Atlanta}
  \state{Georgia}
  \country{United States}}
\email{j.carlyang@emory.edu}

\renewcommand{\shortauthors}{Huaming Du et al.}

\begin{abstract}
Revealing the underlying causal mechanisms in the real world is crucial for scientific and technological progress. Despite notable advances in recent decades, the lack of high-quality data and the reliance of traditional causal discovery algorithms (TCDA) on the assumption of no latent confounders, as well as their tendency to overlook the precise semantics of latent variables, have long been major obstacles to the broader application of causal discovery. To address this issue, we propose a novel causal modeling framework, \textbf{TLVD}, which integrates the metadata-based reasoning capabilities of large language models (LLMs) with the data-driven modeling capabilities of TCDA for inferring latent variables and their semantics. Specifically, we first employ a data-driven approach to construct a causal graph that incorporates latent variables. 
Then, we employ multi-LLM collaboration for latent variable inference, modeling this process as a game with incomplete information and seeking its Bayesian Nash Equilibrium (BNE) to infer the possible specific latent variables. Finally, to validate the inferred latent variables across multiple real-world web-based data sources, we leverage LLMs for evidence exploration to ensure traceability. We comprehensively evaluate TLVD on three de-identified real patient datasets provided by a hospital and two benchmark datasets. Extensive experimental results confirm the effectiveness and reliability of TLVD, with average improvements of 32.67\% in Acc, 62.21\% in CAcc, and 26.72\% in ECit across the five datasets.

\end{abstract}

\begin{CCSXML}
<ccs2012>
   <concept>
       <concept_id>10002950.10003648.10003649.10003655</concept_id>
       <concept_desc>Mathematics of computing~Causal networks</concept_desc>
       <concept_significance>500</concept_significance>
       </concept>
   <concept>
       <concept_id>10002951.10003260.10003261</concept_id>
       <concept_desc>Information systems~Web searching and information discovery</concept_desc>
       <concept_significance>500</concept_significance>
       </concept>
 </ccs2012>
\end{CCSXML}

\ccsdesc[500]{Mathematics of computing~Causal networks}
\ccsdesc[500]{Information systems~Web searching and information discovery}

\keywords{Latent variable discovery, Information Retrieval, Large language model, Healthcare}


\maketitle
\section{Introduction}
Causal discovery aims to identify causal relationships from observational data and has been successfully applied in many fields \cite{pearl2019seven,du2025causal}. However, traditional methods—such as the PC algorithm \cite{spirtes2000causation}, GES \cite{chickering2002optimal}, and LiNGAM \cite{shimizu2006linear}—typically assume that no latent confounders exist in the causal graph, an assumption that often does not hold in many real-world scenarios. Therefore, extensive research has been devoted to addressing this issue from two directions to improve causal structure learning.

The first line of research focuses on inferring causal structures among observed variables despite the possible existence of latent confounders. Representative approaches include FCI and its variants based on conditional independence tests \cite{pearl2009causality,akbari2021recursive}, as well as over-complete ICA-based techniques that further exploit non-Gaussianity \cite{salehkaleybar2020learning}.
\begin{figure}[t] 
\includegraphics[width=0.48\textwidth]{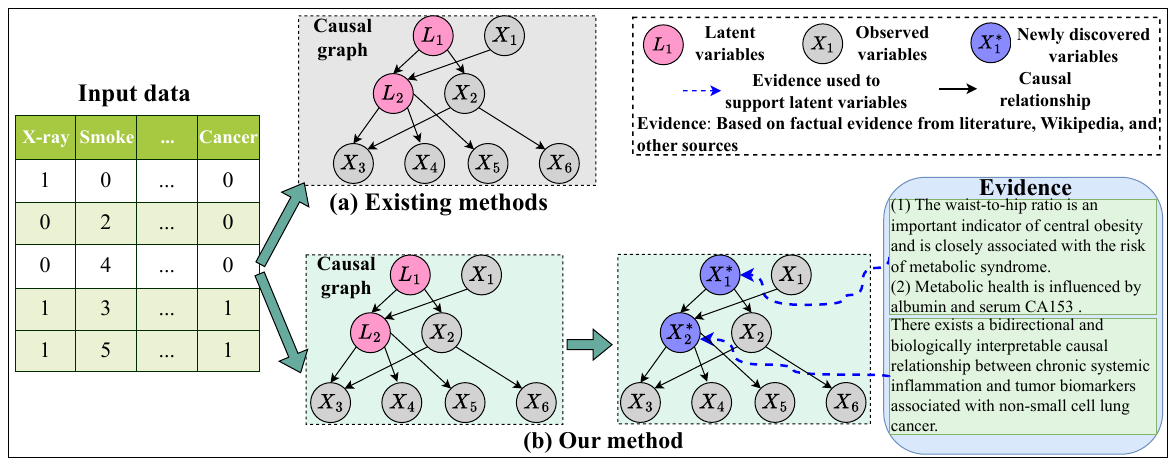}
  \centering
  \caption{A toy example of latent variable discovery using tabular data.}
  \Description{figure1}
  \label{fig:toy}
  \vspace{-1em}
\end{figure}
The second line of research focuses more on inferring causal structures among latent variables under the assumption that observed variables are not directly connected. This category includes Tetrad condition-based approaches \cite{kummerfeld2016causal}, high-order moments-based methods \cite{xie2020generalized,chen2022identification}, matrix decomposition-based approaches \cite{anandkumar2013learning}, and mixture oracle-based methods\cite{kivva2021learning}. Recently, Dong et al. \cite{dong2024versatile} proposed a three-phase causal discovery algorithm based on rank constraints that can identify the complete causal structure involving both observed and latent variables. However, as shown in Figure \ref{fig:toy}(a), existing studies focus on recovering causal structures involving latent variables but rarely infer the specific latent variables and their semantics.

In recent years, large language models (LLMs) \cite{achiam2023gpt,guo2025deepseek} have achieved remarkable breakthroughs in natural language understanding and generation, marking a critical milestone in the pursuit of artificial general intelligence. As the capabilities of LLMs continue to advance, LLM-based agents \cite{qin2024toolllm,wang2024survey} are increasingly becoming core components for integrating domain expertise and tools, effectively transforming these technological advances into practical applications. Building upon this paradigm, multi-agent systems \cite{li2025agent,ren2025towards} incorporate multiple diverse agents to coordinate and leverage their respective strengths, offering high flexibility and adaptability to provide more comprehensive solutions to complex real-world problems. The rapid development of LLMs has also provided new solutions for causal discovery \cite{zheng2024mulan}.

This paper focuses on a more challenging problem: \textbf{traceable latent variable discovery}, i.e., \textit{identifying specific latent variables and retrieving supporting evidence from multiple web data sources}. This setting is more general and more practical for handling many real-world problems, such as uncovering unknown factors contributing to diseases. This challenge involves two fundamental questions:
(i) How can multiple LLM-based agents be coordinated to efficiently propose and define possible latent variables?
(ii) How can evidence be retrieved from diverse web data sources (e.g., academic websites, Wikipedia, open databases, etc.) to consolidate the discovered latent variables?

To address this challenging problem, we propose a multi-agent collaboration framework for latent variable discovery, which simultaneously leverages real web data sources for evidence tracing in (causal) knowledge discovery. Specifically, we first employ latent variable causal discovery algorithms to recover causal graphs containing latent variables. Then, based on the recovered graph structure, we introduce game-theoretic principles, where multiple agents maintain their own belief networks to enable efficient collaboration, and utilize Bayesian Nash Equilibrium to hypothesize latent variables. Finally, we utilize LLMs to iteratively retrieve and exploit causal evidence across multiple real-world web data sources.

We conducted extensive experiments on our own real-world hospital dataset WCHSU as well as two benchmark datasets. Comprehensive evaluation results demonstrate that TLVD exhibits clear advantages over existing SOTA methods. The experimental results and discussions, ablation studies, parameter analyses, case studies, and failure analyses, will be presented and analyzed in Section \ref{sec:exp}.

The main contributions of this paper are as follows:

\noindent $\bullet$ A unified latent variable discovery framework called TLVD, which integrates LLMs with TCDA and employs multi-agent collaboration to infer more plausible and semantically interpretable latent variables.

\noindent $\bullet$ First, we incorporate the prior knowledge of LLMs, allowing them to serve as priors for latent variable explanation. Then, we model the multi-agent collaboration process as a game of incomplete information to efficiently reason about latent variables. Meanwhile, causal evidence is retrieved from multiple web data sources, and the latent variables are iteratively updated and validated.

\noindent $\bullet$  Extensive experimental results on various real-world medical and generic benchmark datasets demonstrate the superiority of TLVD.

\section{Related work}
\subsection{Causal Discovery}
Traditional and LLM-based causal discovery algorithms \cite{le2024multi,du2025causal} usually assume that all task-relevant variables are observable \cite{peters2014causal,mokhtarian2025recursive}. However, latent variables are prevalent in practice and can lead these methods to produce spurious causal relations, which has motivated extensive research on causal discovery with latent variables. Existing methods for handling latent variables in causal discovery can be broadly categorized into nine classes: Conditional independence constraints, Tetrad condition, Over-complete independent component analysis, Generalized independent noise, Mixture oracles-based, Rank deficiency, Heterogeneous data, and Score-based methods. Many methods fall within the constraint-based framework, combining conditional independence tests with algebraic constraints to infer causal relations, with representative approaches based on rank or tetrad constraints \cite{huang2022latent,dong2024versatile}. While most current methods still follow the constraint-based paradigm, some recent studies have started to formalize score-based approaches for latent causal discovery \cite{zhu2024causal,ng2024score}. Although these methods have advanced the field of causal discovery, none of them attempt to identify the specific latent variables and their semantics.

\subsection{LLM-based Multi-Agent Systems}
LLMs have exhibited remarkable proficiency in tackling various sophisticated tasks. However, they still suffer from several inherent limitations
, such as hallucination \cite{huang2025survey}, their autoregressive nature (e.g., inability to engage in slow thinking \cite{hagendorff2023human}), and constraints imposed by scaling laws. Recent studies have extensively explored multi-agent collaboration frameworks based on LLMs, aiming to tackle complex cognitive and decision-making tasks \cite{zhao2025sirius,yi2025from}. One prominent paradigm involves explicit role-playing mechanisms to simulate human collaborative dynamics, assigning different LLM agents to specialized roles within an organization \cite{hong2024metagpt}.
Other frameworks further enhance multi-agent collaboration through voting and consensus mechanisms \cite{zhangmore}, collective reasoning or discussion-based methods \cite{chen2024reconcile}, and structured (e.g., graph \cite{lu2024nodemixup,lu2024skipnode}-based) agentic debate approaches \cite{du2023improving,zhang2025g}, aiming to improve factual accuracy and logical consistency. However, current popular multi-LLM collaboration frameworks lack solid theoretical foundations, offer no guarantees of convergence or cooperation, and have yet to be explored in the field of causal discovery.

\section{Traceable Latent Variable Discovery Framework}
In this section, building upon game theory and reinforcement learning, we propose an evidence-traceable multi-agent collaboration framework for exploring latent variables and their semantics. As illustrated in Figure \ref{framework}, the framework primarily consists of three stages: (i) Stage I: Identification of latent causal graph structures; (ii) Stage II: Utilization of multi-agent collaboration to identify latent variables; (iii) Stage III: Validation of latent variables. It is important to note that this paper treats LLMs as large-scale background knowledge providers, while the core identifiability guarantees still rely on classical latent variable discovery algorithms.
\begin{figure*}[htbp] 
\includegraphics[width=0.97\textwidth]{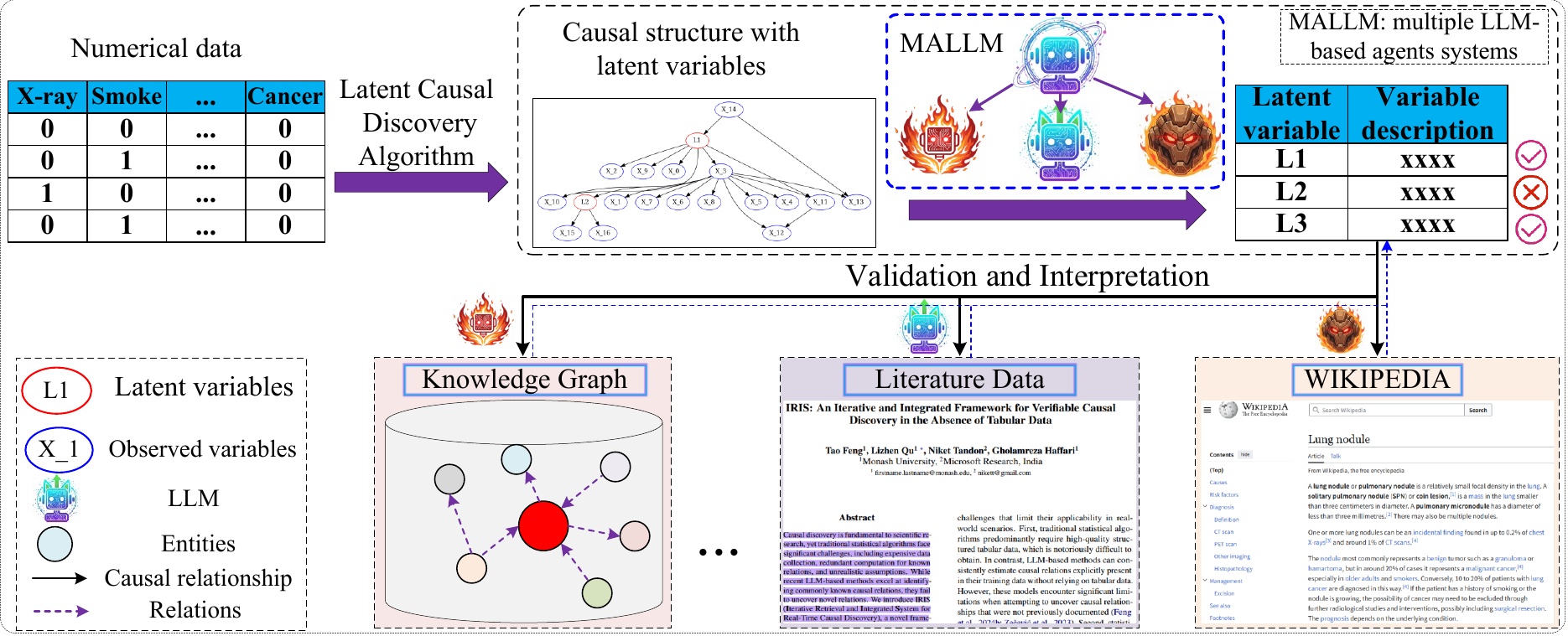}
  \centering
  \caption{The overview of TLVD framework.}
  \Description{framework.}
  \label{framework}
\end{figure*}
\subsection{Identifying Latent Causal Graph Structures}
In this paper, we aim to address a more general scenario for latent variable causal discovery, where observed variables can be directly adjacent, and latent variables can flexibly be related to all other variables. That is, hidden variables can serve as confounders, mediators, or effects of latent or observed variables, and even form a hierarchical structure (see Figure \ref{fig:toy}). This setup is quite general and practically meaningful for addressing many real-world problems. RLCD \cite{dong2024versatile} is currently the most advanced algorithm with theoretical guarantees, and thus, we use it to identify latent causal structures. In addition, we also explore the impact of other latent causal structure identification methods on the performance of our framework (see Appendix \ref{addition}). The specific formulation is as follows:
\begin{equation}
\mathcal{G}^{'}=RLCD\left(\mathbb{X}_{\mathcal{G}}\right)\ ,
\end{equation}
where $\mathbb{X}_{\mathcal{G}}$ denotes the data samples containing $n$ observed variables, and $\mathcal{G}^{'}$ represents the Markov equivalence class. 
\subsection{Identifying Latent Variables}\label{identify_lv}
Previous research \cite{huang2022latent,dong2024versatile}has primarily focused on revealing the positional and structural information of latent variables, while overlooking the discovery of the latent variables themselves and their semantics. However, identifying latent variables is crucial for causal inference and its practical applications. Although LLMs are capable of performing complex tasks such as creative writing, reasoning, and decision-making, with abilities that in some aspects even surpass human level, they are still constrained by hallucinations \cite{huang2025survey}, their autoregressive nature (e.g., inability to perform slow thinking \cite{hagendorff2023human}), and scaling laws \cite{ruan2024observational}. Inspired by the theory of mind \cite{frith2005theory}, multiple LLM-based agents systems have been developed, enabling teamwork and specialization by integrating the strengths and perspectives of individual agents to achieve shared goals. Therefore, in this paper, we adopt LLM-based multi-agent collaboration to identify latent variables and their semantics.
\begin{equation}
    \mathcal{X}_{i}=MALLM\left(x_{i},L_{i}\right)\ ,
\end{equation}
where $MALLM$ is the multi-agent collaboration system designed in this work, $L_{i}$ denotes the $i$-th latent variable, $x_{i}$ represents the Markov blanket corresponding to the $L_{i}$ in $\mathcal{G}^{'}$, and $\mathcal{X}_{i}$ denotes the concrete variable with explicit semantics corresponding to $L_{i}$.

However, achieving efficient collaboration among agents in MALLM is not straightforward and faces \emph{three key challenges}. First, extensive inter-agent communication consumes a large number of tokens, increasing computational overhead \cite{du2023improving}. Second, the volume of information exchanged over multiple rounds can exceed the context-window capacity of LLMs, limiting system scalability \cite{liu2024groupdebate}. Third, without well-defined coordination protocols, these systems may underperform compared to simple ensembling or self-consistency methods \cite{liang2024encouraging}. To address these challenges, we innovatively propose a novel approach for efficient coordination via Bayesian Nash Equilibrium (BNE), modeling multi-LLM interactions as an incomplete-information game to identify latent variables and their semantics.

Next, we first define the process, then provide a detailed introduction to the MALLM, and finally conduct the theoretical analysis.

\subsubsection{Process Definition}
We consider a system composed of $N-1$ execution LLMs and a coordinator LLM, where agents coordinate and interact solely based on their beliefs. Consistent with prior studies \cite{liang2025everyone}, we formally model this as a decentralized partially observable Markov decision process (DEC-POMDP), defined as a Markov game $\left \langle \mathcal{S}, \mathcal{A}, O, \mathcal{P}, \Omega, \mathcal{R}, \gamma \right \rangle $. Here, $\mathcal{S}$ represents the state space, including user queries and dialogue context; $\mathcal{A} = \mathcal{A}_1 \times \dots \times \mathcal{A}_N$ is the joint action space, with each $\mathcal{A}_i$ defining agent $i$’s action as a prompt embedding $a_i = [T_i, p_i]$, which controls the LLM’s output behavior via temperature and repetition penalty; $N$ is the number of agents; $O$ is the joint observation space; $\mathcal{P}$ and $\Omega$ define the state transition and observation functions; $\mathcal{R}$ is the reward, and $\gamma$ is the discount factor.

Our objective is to identify a policy profile $\pi = (\pi_1, \dots, \pi_N)$ that forms a BNE through belief coordination, such that no individual agent can improve the quality of its latent-variable generation and semantic inference by unilaterally changing its policy.

\subsubsection{BNE Implementation with MALLM}\label{MLLM}
In this section, we introduce a MALLM framework compliant with DEC-POMDP. The MLLM adopts a hierarchical architecture where multiple executor LLMs operate locally under the guidance of a coordinator LLM. The framework consists of two stages: Inference (see Figure \ref{agent-reason}) and Optimization (see Figure \ref{agent-framework}). Below, we provide a detailed introduction to the core modules.

\noindent\textbf{(1) Belief Update.} Each execution LLM $i$ maintains a belief network $B_i(\tau^t_i, o^t_i; \theta^B_i)$, which implements its policy $\pi_i$ by mapping the local trajectory $\tau^t_i$ (composed of its previous actions and observations) and current observation $o^t_i \in O_i$ into a belief state $b_i \in \mathbb{R}^d$. The belief state characterizes the agent’s understanding of the environment and the behaviors of other agents under partial observability, enabling strategic decision-making without direct access to others’ outputs. The belief state $b_i$ is further used to generate the prompt embedding $e_{i} = \left [ T_{i}
, p_{i} \right ]$, which controls the LLM’s output regarding the latent variables and their semantics. The specific formulation is as follows: 
\begin{equation}
\begin{aligned}
T_{i} &= \sigma \left ( W_{T} b_{i} + b_{T} \right ) \\
p_{i} &= \sigma \left ( W_{p} b_{i} + b_{p} \right )\ ,
\end{aligned}
\end{equation}
where $\sigma(\cdot)$ is the sigmoid function. $W_T$ and $b_T$ are learnable parameters. The belief network $B_i$ produces two outputs: (1) The prompt embedding $e_i$, which serves as the action in the DEC-POMDP framework; (2) The local Q-value $
Q^t_i(\tau^t_i, e^t_i; \varphi_i)
$, which estimates the expected return from the current belief state.
The belief state $b_i$ is also passed to a belief encoder for group-level processing.

To optimize the belief network, we apply a Temporal Difference (TD) loss to update the parameters $\theta^B_i = \left \{\varphi_i, W_T, b_T, W_p, b_p \right \} $, which include the Q-value function parameters $\varphi_i$ and the prompt embedding parameters, as illustrated in Figure \ref{agent-framework}. The TD loss is defined as follows:
\begin{equation}
\scalebox{0.84}{$
   \mathcal{L}_{TD}^i(\theta^B_i) = \mathbb{E}_\mathcal{D} \left[ \left( r^t_i + \gamma \;\underset{{e^{t+1}_i}}{\max} Q^{t+1}_i(\tau^{t+1}_i, e^{t+1}_i; \varphi'_i) - Q^t_i(\tau^t_i, e^t_i; \varphi_i) \right)^2 \right]$}\ , 
\end{equation}
where $r^t_i = \mathcal{R}(s^t, a^t)_i$ denotes the local reward signal, and $\varphi'_i$ represents the target network parameters updated via soft update mechanism. By minimizing $\mathcal{L}_{TD}^i$, execution LLM $i$ refines its belief state to improve local decision-making. To enable agents to implicitly exchange beliefs and thereby facilitate system convergence to BNE, we employ an attention mechanism within the belief encoder to capture inter-agent dependencies in the belief states:
\begin{equation}
    \mathcal{B} = \text{Attention} (W^Q \mathbf{b}, W^K \mathbf{b}, W^V \mathbf{b};\theta _{e})\ ,
\end{equation}
where $\mathbf{b} = [b_1; \dots; b_N] \in \mathbb{R}^{Nd}$ is the concatenated vector of the individual belief states $\{b_i\}_{i=1}^N$. The final output is: $E =  \mathcal{B} W^O$, where $\{W^Q, W^K, W^V\}$ are learnable parameters and $W^O$ is the output projection. The belief encoder captures high-level interactions among execution LLMs, ensuring coherence in group behavior.
\begin{figure}[t] 
\includegraphics[width=0.48\textwidth]{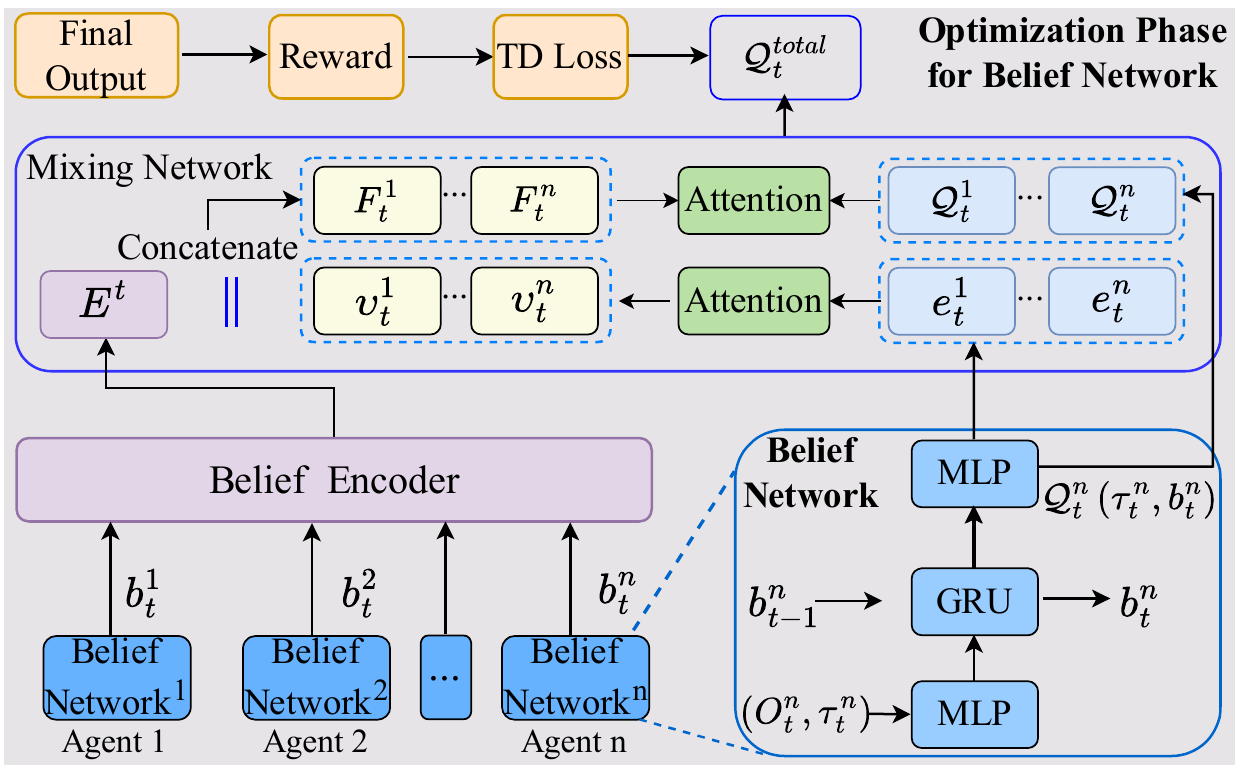}
  \centering
  \caption{The train process of MALLM.}
  \Description{Optimization Phase for Belief Network.}
  \label{agent-framework}
\end{figure}

\noindent\textbf{(2) Mixing Network.} Inspired by Qmax \cite{rashid2018qmix}, the mixing network coordinates the integrated belief information from all execution LLMs, thereby driving the overall system toward optimization with respect to the BNE. Specifically, each agent’s prompt embedding $\{e_t^i\}_{i=1}^N$ is first processed through a self-attention mechanism to capture inter-agent dependencies, producing intermediate embeddings $\{\upsilon_t^i\}_{i=1}^N$. These intermediate embeddings $\{\upsilon_t^i\}_{i=1}^N$ are then combined with the group-level belief representation $E_t$ to produce feature transformations $\{F_t^i\}_{i=1}^N$. This combination enables the network to jointly integrate individual belief information (captured in $e_t^i$) and collective belief dynamics (captured in $E_t$), thereby facilitating coordinated optimization. Next, the local Q-values $\{Q_t^i\}_{i=1}^N$ and the transformed features $\{F_t^i\}_{i=1}^N$ are fed together into multi-head attention layers to compute the global Q-value $Q_t^{tot}$.
This global Q-value function accounts for local–global interactions, ensuring that improvements in individual behaviors also contribute to overall performance enhancement. To train the mixing network, the following loss function is minimized:
\begin{equation}
\begin{aligned}
  \mathcal{L}_{\text{mix}}\left(\phi\right)=& \mathbb{E}_{\mathcal{D}}\Big[\big(r_{tot}+\gamma\max_{\{e^{t+1}_i\}_{i=1}^N}Q^{t+1}_{tot}(\tau_{t+1},\{e^{t+1}_i\}_{i=1}^N;\phi')
 - \\&Q^t_{tot}(\tau_t,\{e^t_i\}_{i=1}^N;\phi)\big)^2\Big]+\lambda_m \sum_{i=1}^N\|Q_i^t-Q^t_{tot}\|^2\ ,  
\end{aligned}
\end{equation}
the term $\|Q^t_i - Q^t_{tot}\|^2$ ensures that local Q-values remain consistent with the global estimate. In addition, the target network parameters $\phi'$ are updated using a soft update rule: $\phi' \leftarrow \tau \phi + (1 - \tau)\phi'$. Through this mechanism, the mixing network can optimize local policies to improve the global objective, thereby promoting stable convergence during training.

\noindent\textbf{(3) Reward Design.}
The reward function $\mathcal{R}_{\text{design}}$ consists of the following \underline{four components}. \textit{(a) Action Likelihood Reward} $r^{AL}_i = \min(R_{max}, \text{sim}(u_i, C))$
measures the consistency of the final output with the target via cosine similarity $\text{sim}(u_i, C) = \frac{u_i \cdot C}{\|u_i\| \|C\|}$, where $C$ denotes the output of the coordinator and $u_i$
 represents the output of an individual executor. \textit{(b) Uncertainty Reduction Reward}
$r^{UR}_i = \mathcal{C}on_{i}\cdot \text{sim}(u_i, C)$ 
assesses the uncertainty of LLMs by considering both the model’s direct confidence and the similarity to the overall answer, where $\mathcal{C}on_{i}$ represents the confidence of agent $i$. \textit{(c) Collaborative Contribution Reward} $r^{CC}_i = \min(R_{max}, \text{quality}(u_i, \{u_j\}_{j \neq i}))$ assesses each agent’s contribution to the collective solution \cite{xie2023self}. \textit{(d) Evidence Reliability Reward} $r^{ER}_i=CAcc_{i}$ measures the reliability of the evidence corresponding to each query result across multiple web data sources, where $CAcc$ represents the evidence ratio. For further details, see Appendix \ref{data}.
The total reward is computed as: 
$r_i = \alpha_1 r^{AL}_i + \alpha_2 r^{UR}_i + \alpha_3 r^{CC}_i+ \alpha_4 r^{ER}_i$, where $\alpha_1 + \alpha_2 + \alpha_3+ \alpha_4 = 1$. 
\begin{figure}[t] 
\includegraphics[width=0.48\textwidth]{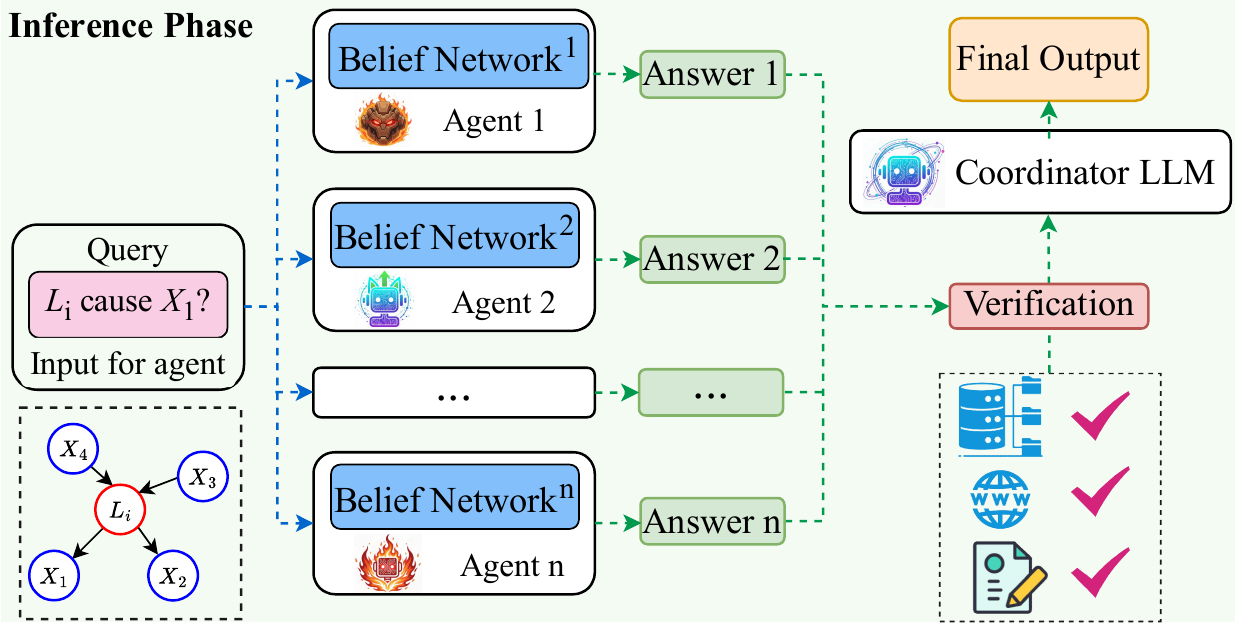}
  \centering
  \caption{The reasoning process of MALLM.}
  \Description{Optimization Phase for Belief Network.}
  \label{agent-reason}
\end{figure}

\noindent\textbf{(4) Inference Phase.}
During the inference phase, the coordinator LLM distributes query information containing latent variables to multiple execution LLMs, which independently generate their own answers. By incorporating the retrieved evidence from Section \ref{sec:verfi}, the coordinator LLM aggregates both the answers and the evidence to produce the final output, as shown in Figure \ref{agent-reason}. 
\subsubsection{Theoretical Analysis}
In this subsection, we mainly analyze the existence and convergence of BNE.

\noindent\textbf{(1) Existence of BNE.}
To bridge the DEC-POMDP formulation with BNE analysis, we treat each agent $i$ as a player, where each player forms and updates beliefs about other agents’ types based on a common prior and its own observations. Each player’s type $\theta_i$ is determined by their internal beliefs and private observation history. Formally, if there exists a strategy profile $\{\pi^*_i\}_{i=1}^N$ such that for any player $i$:
\begin{equation}
\scalebox{0.90}{$
    \mathbb{E}_{\theta_{-i}}
\big[
U_i(\pi^*_i(\theta_i),\ \pi^*_{-i}(\theta_{-i}))
\big]
\ge
\mathbb{E}_{\theta_{-i}}
\big[
U_i(\pi'_i(\theta_i),\ \pi^*_{-i}(\theta_{-i}))
\big],
\quad \forall \pi'_i\ ,$}
\end{equation}
where $\theta_{-i} = (\theta_1,\dots,\theta_{i-1},\theta_{i+1},\dots,\theta_N)$ denotes the types of all agents except $i$.
The utility function is defined as:
\begin{equation}
    U_i(\pi^*_i, \pi^*_{-i}, \theta_i, \theta_{-i}) =
\mathbb{E}\left[\sum_{t=0}^{\infty} \gamma^{t} r^t_i| \pi^*_i, \pi^*_{-i}, \theta_i, \theta_{-i}\right]\ ,
\end{equation}
where $r^t_i = R(s^t, a^t)_i$ denotes the reward obtained by player $i$ at time $t$, $N$ denotes the number of players. 

\begin{theorem}\label{the1}
(Existence of BNE) In our MALLM framework, assuming certain conditions \cite{yi2025from} hold, then by Glicksberg’s Fixed Point Theorem \cite{ahmad2023common}, there exists a BNE strategy profile
$\pi^\ast = (\pi^*_1,\dots,\pi^*_N)$. A complete proof is provided in Appendix \ref{app-them1}.
\end{theorem}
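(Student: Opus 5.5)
The plan is to recast the MALLM game as a Bayesian game with compact strategy spaces and continuous payoffs, and then apply Glicksberg's fixed point theorem to the best-response correspondence.

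\textbf{Step 1: strategy spaces.} Each agent's action is the prompt embedding $e_i=[T_i,p_i]$. Both coordinates are sigmoid outputs, so every action lies in $\mathcal{A}_i=[0,1]^2$, which is nonempty, compact and convex. Following the conditions borrowed from \cite{yi2025from}, I assume the following.
\begin{itemize}
\item The type space $\Theta_i$ (internal beliefs plus private observation history) is a compact metric space, or is finite after discretizing trajectories.
\item There is a common prior $\mu$ on $\Theta=\prod_i\Theta_i$.
\item Each reward component ($r^{AL}$, $r^{UR}$, $r^{CC}$, $r^{ER}$) is bounded by $R_{\max}$ or by $1$, and depends continuously on the joint action.
\end{itemize}
Agents play behavioral (mixed) strategies $\pi_i:\Theta_i\to\Delta(\mathcal{A}_i)$. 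Identifying $\pi_i$ with a distributional strategy on $\Theta_i\times\mathcal{A}_i$ whose $\Theta_i$-marginal is $\mu_i$, the strategy set $\Sigma_i$ is nonempty and convex. By Prokhorov's theorem it is also compact in the weak-* topology, because $\Theta_i\times\mathcal{A}_i$ is compact and the marginal constraint is a closed condition.

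\textbf{Step 2: continuity and concavity of utilities.} Since $|r_i^t|\le R_{\max}$ and $\gamma<1$, the series defining $U_i$ converges uniformly, with $|U_i|\le R_{\max}/(1-\gamma)$. Under continuity of the transition and observation kernels $\mathcal{P},\Omega$ in actions, each finite-horizon truncation is continuous in the joint strategy. The discounted tail is uniformly small, so the expected utility $\mathbb{E}_{\theta}[U_i(\pi_i,\pi_{-i})]$ is jointly continuous on $\Sigma=\prod_i\Sigma_i$. Moreover, the expected utility is linear in $\pi_i$ for fixed $\pi_{-i}$, because it is an expectation against a mixed strategy. In particular it is quasi-concave in $\pi_i$.

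\textbf{Step 3: fixed point.} Define the best-response correspondence $BR(\pi)=\prod_i \arg\max_{\pi_i'\in\Sigma_i}\mathbb{E}[U_i(\pi_i',\pi_{-i})]$. Each component is nonempty by Weierstrass, since a continuous function attains its maximum on a compact set. It is convex-valued by linearity in $\pi_i'$. It has a closed graph by Berge's maximum theorem. $\Sigma$ is a nonempty compact convex subset of a locally convex Hausdorff space (product of weak-* topologies). The Kakutani--Fan--Glicksberg theorem \cite{ahmad2023common} then yields a fixed point $\pi^*\in BR(\pi^*)$. This fixed point is exactly the interim inequality in the BNE definition above: ex-ante optimality against $\mu$ implies optimality for $\mu_i$-almost every type $\theta_i$. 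The remaining null-measure set of types is handled by redefining $\pi_i^*$ there to be a best response, which does not change the opponents' payoffs.

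\textbf{Main obstacle.} I expect the hardest step to be Step 2. The rewards $r^{AL}$, $r^{UR}$ and $r^{CC}$ are computed from LLM text outputs $u_i$ generated under $(T_i,p_i)$, and it is not obvious that their expectations vary continuously in the action. My first attempt would be to assume, as part of the conditions of \cite{yi2025from}, that the LLM's output distribution is weakly continuous in $(T_i,p_i)$. This is plausible because temperature-scaled softmax is continuous in $T_i$ for $T_i$ bounded away from $0$. Since cosine similarity is bounded and continuous on the embedding sphere, the expected reward would then be continuous.

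A related difficulty is that $\Theta_i$ (trajectory histories) is infinite-dimensional. I would truncate histories to the belief state $b_i\in\mathbb{R}^d$ and restrict it to a compact set, which the bounded network outputs justify.
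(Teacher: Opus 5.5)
Your argument has the same skeleton as the paper's proof. Both use a best-response correspondence, compact convex strategy sets, continuous payoffs, linearity in mixed strategies to get quasi-concavity, Weierstrass for nonemptiness, Berge for upper hemicontinuity, and then Glicksberg. Where you genuinely differ is the choice of strategy space. The paper takes $\Pi_i$ to be the measurable maps $\Theta_i\to A_i$ and gets compactness from Tychonoff. It gets continuity by treating strategies as continuous in types. That only works cleanly for finite $\Theta_i$: for uncountable $\Theta_i$, the measurable maps are dense but not closed in $A_i^{\Theta_i}$, and integrals are not continuous along pointwise-convergent nets. You instead use Milgrom--Weber-style distributional strategies, with weak-* compactness from Prokhorov, the uniform discounted-tail argument, and the ex-ante-to-interim almost-everywhere step. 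That is the standard way to make the argument sound for a compact metric type space. In the finite-type case your proof is complete as written.

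One step does not follow from the hypotheses you list: the joint weak-* continuity of $\mathbb{E}[U_i]$ on $\Sigma$ when types form a continuum and are correlated under $\mu$. Continuity of rewards and kernels in actions is not enough. Take $\theta_1=\theta_2\sim U[0,1]$, $A_i=[0,1]$, $u=1-|a_1-a_2|$, and let both players use $s^n(\theta)=\mathbf{1}\{\sin(2\pi n\theta)\ge 0\}$. Each induced $\sigma^n$ converges weak-* to $\lambda\otimes\frac12(\delta_0+\delta_1)$. Yet the payoff equals $1$ along the sequence and $\frac12$ at the limit profile.

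You need to add Milgrom--Weber's absolutely continuous information condition ($\mu\ll\bigotimes_j\mu_j$, e.g.\ independent types) to the theorem's ``conditions''. In general Bayesian games with correlated continuum types, a measurable BNE can fail to exist without some such condition. The paper's proof glosses over the same point.

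A smaller point: your continuity justification for the LLM output only covers $T_i$ bounded away from $0$. So take $\mathcal{A}_i=[\varepsilon,1]\times[0,1]$ rather than $[0,1]^2$; it is still compact and convex.
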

\noindent\textbf{(2) Convergence of BNE.}
We analyze the convergence of the TLVD framework through Bayesian regret. First, to connect the theoretical analysis with the DEC-POMDP formulation, we define the Bayesian regret of each agent $i$ over $T$ steps as:
\begin{equation}
    R_i(T)=
\mathbb{E}_{s_t,\pi_t}
\left[
\sum_{t=1}^T \big(V^*_i(s_t)-V^{\pi_t}_i(s_t)\big)
\right]\ ,
\end{equation}where the optimal value function under BNE is:
\begin{equation}
    V^*_i(s)=
\max_{\pi_i}
\mathbb{E}_{\pi^*_{-i}}
\left[
\sum_{t=0}^\infty \gamma^t \mathcal{R}(s_t,a_t)_i
\mid s_0=s,\ \pi_i,\ \pi^*_{-i}
\right]\ ,
\end{equation}
and $V^{\pi_t}_i(s)$ is the value function under the current strategy profile $\pi^t = (\pi^t_1,\dots,\pi^t_N)$ at time $t$.
The expectation accounts for randomness in both state transitions (governed by $P$) and policy selections. The total Bayesian regret is therefore defined as:
$R(T) = \sum_{i=1}^N R_i(T)$.

Second, we introduce standard assumptions \cite{yi2025from} and propose Lemma \ref{them2}, which provides an upper bound on the Bayesian regret. A proof sketch is given below, with more details provided in Appendices \ref{convergence}, \ref{proof-them2}, and \ref{regret}.

\begin{lemma}\label{them2}
(Performance Difference)
For joint policies $\pi = (\pi_i, \pi_{-i})$ and $\pi' = (\pi'_i, \pi'_{-i})$, the difference in their value functions satisfies:
\begin{equation}
\scalebox{0.92}{$
V^{\pi'}_i(s) - V^{\pi}_i(s)=
\frac{1}{1 - \gamma}
\ \mathbb{E}_{s\sim d_{\pi'}}\Big[
\mathbb{E}_{a\sim \pi'} Q^{\pi}_i(s, a)
-
\mathbb{E}_{a\sim \pi} Q^{\pi}_i(s, a)
\Big]\ ,$}
\end{equation}
where $d_{\pi'}$ is the state distribution under policy $\pi'$, and $a=(a_i,a_{-i})$ denotes the joint action from the action space $\mathcal{A}$.
\end{lemma}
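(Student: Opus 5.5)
This is the Kakade--Langford performance difference lemma, specialized to agent $i$'s reward in the Markov game. I treat the DEC-POMDP at the level of the underlying state $s$ and the joint action $a=(a_i,a_{-i})$, so the multi-agent aspect only enters through $a\sim\pi$ being a product distribution. I take $d_{\pi'}$ to be the normalized discounted occupancy $d_{\pi'}(s')=(1-\gamma)\sum_{t\ge 0}\gamma^t \Pr(s_t=s'\mid s_0=s,\pi')$; the factor $\frac{1}{1-\gamma}$ in the statement is then exactly the normalization. Throughout I assume bounded rewards (as in the standing assumptions of \cite{yi2025from}) so that all sums converge absolutely and can be reordered.

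The steps are as follows.

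\begin{enumerate}
\item Write $V^{\pi'}_i(s)=\mathbb{E}_{\tau\sim\pi'}\bigl[\sum_{t\ge0}\gamma^t \mathcal{R}(s_t,a_t)_i\bigr]$, where the trajectory starts at $s_0=s$ and actions are drawn from $\pi'$.

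\item Add and subtract the telescoping series $\sum_{t\ge0}\bigl(\gamma^{t+1}V^{\pi}_i(s_{t+1})-\gamma^t V^{\pi}_i(s_t)\bigr)$ along the same $\pi'$-trajectory. This sum equals $-V^{\pi}_i(s_0)$, because $\gamma^t V^\pi_i(s_t)\to0$ by boundedness. This gives
\[
V^{\pi'}_i(s)-V^{\pi}_i(s)=\mathbb{E}_{\tau\sim\pi'}\Bigl[\sum_{t\ge0}\gamma^t\bigl(\mathcal{R}(s_t,a_t)_i+\gamma V^{\pi}_i(s_{t+1})-V^{\pi}_i(s_t)\bigr)\Bigr].
\]

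\item Condition on $(s_t,a_t)$ and use the tower property over $s_{t+1}\sim\mathcal{P}(\cdot\mid s_t,a_t)$. The Bellman identity $Q^\pi_i(s,a)=\mathcal{R}(s,a)_i+\gamma\mathbb{E}_{s'}V^\pi_i(s')$ turns each summand into $Q^\pi_i(s_t,a_t)-V^\pi_i(s_t)$. Since $V^\pi_i(s_t)=\mathbb{E}_{a\sim\pi(\cdot\mid s_t)}Q^\pi_i(s_t,a)$, this is the advantage of $a_t$ under $\pi$.

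\item Take the expectation over $a_t\sim\pi'(\cdot\mid s_t)$. Each term becomes $\mathbb{E}_{a\sim\pi'}Q^\pi_i(s_t,a)-\mathbb{E}_{a\sim\pi}Q^\pi_i(s_t,a)$.

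\item Exchange the sum over $t$ with the expectation over $s_t$ (Fubini). Grouping $\sum_t\gamma^t\Pr(s_t=\cdot)$ as $\frac{1}{1-\gamma}d_{\pi'}$ yields the stated identity.
\end{enumerate}

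The main obstacle is not the algebra but making the objects well defined in the partially observable setting.

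\begin{itemize}
\item The policies $\pi_i$ depend on local trajectories $\tau_i^t$ through the belief networks $B_i$, not on $s$ alone. So $V^\pi_i(s)$ and $Q^\pi_i(s,a)$ are not Markov in $s$ unless the state is augmented.

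\item I would handle this by replacing $s$ with the augmented state $\tilde s_t=(s_t,\tau^t_1,\dots,\tau^t_N)$. This state is Markov under any fixed joint policy, and the belief states $b_i$ are deterministic functions of it.

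\item On the augmented state, the joint policy factorizes as $\prod_j\pi_j(a_j\mid\tau_j)$, and the argument above goes through verbatim.

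\item I would note explicitly that $d_{\pi'}$ in the statement should be read on this augmented space, with the dependence on the initial state $s$ suppressed.
\end{itemize}

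A secondary point is the interchange of sum and expectation. Boundedness of the reward, say $|r_i^t|\le R_{\max}$ (which holds for the rewards in the reward design since each component is capped or a similarity/ratio), gives $|Q^\pi_i|\le R_{\max}/(1-\gamma)$. Hence the series are dominated by a geometric series, and the exchange is justified.
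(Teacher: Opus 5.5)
Your proof is correct, and its key step differs from the paper's. The paper subtracts the two discounted reward series directly, each under its own visitation distribution $d_k^{\pi'}$ and $d_k^{\pi}$. Then, ``assuming the difference in state distributions is negligible,'' it rewrites the result as $\sum_k\gamma^k\mathbb{E}_{s_k\sim d_k^{\pi'}}[Q^{\pi}_i(s_k,a'_k)-V^{\pi}_i(s_k)]$, and finishes with $V^\pi_i=\mathbb{E}_{a\sim\pi}Q^\pi_i$ and the geometric normalization. That intermediate identity is in fact exact, but not because of the stated assumption. Equal state distributions would only let one compare per-step expected rewards. Turning those into advantages requires exactly the telescoping of $\gamma^{t+1}V^\pi_i(s_{t+1})-\gamma^tV^\pi_i(s_t)$ along the $\pi'$-trajectory that you carry out in your step 2. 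Read literally, the paper's approximation could not deliver the equality the lemma asserts. So your route gives an exact derivation with no assumption on state distributions, while the paper's gains only brevity. You also make explicit three things the paper leaves implicit. First, the $\frac{1}{1-\gamma}$ factor is correct only when $d_{\pi'}$ is the normalized discounted occupancy from the initial state. Second, $Q^\pi_i(s,a)$ is well defined only on a Markov state; for belief-network policies that depend on local histories, this requires your augmentation $(s_t,\tau^t_1,\dots,\tau^t_N)$, whereas the paper silently treats $s$ as Markov. Third, the Fubini exchange needs bounded rewards. On that last point, $\min(R_{\max},\cdot)$ caps $r^{CC}_i$ only from above, so its lower boundedness is part of your assumption rather than a consequence of the reward design.
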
 
Note that the function $Q^{\pi}_i(s,a)$ in this lemma will be approximated by neural networks in our implementation (see Section \ref{MLLM} for details). Consistent with existing studies \cite{fujimoto2018addressing,yi2025from}, we apply this lemma to our regret analysis and obtain:
\begin{equation}
\scalebox{0.82}{$
        R(T)=
\sum_{i=1}^{N}
\frac{1}{1-\gamma}
\ \mathbb{E}_{s_t,\pi_t}\!\Bigg[
\sum_{t=1}^T
\Big(
\mathbb{E}_{a^*_t\sim \pi^*} Q^{\pi_t}_i(s_t,a^*_t)
-\mathbb{E}_{a_t\sim \pi_t} Q^{\pi_t}_i(s_t,a_t)
\Big)
\Bigg]$}\ ,
\end{equation}
where $\pi^*$ denotes the BNE policies. By bounding the suboptimality $\delta_t$, we obtain:
\begin{equation}
    \mathbb{E}_{a^*_t\sim \pi^*} Q^{\pi_t}_i(s_t,a^*_t)-\mathbb{E}_{a_t\sim \pi^t} Q^{\pi_t}_i(s_t,a_t)
\;\le\; \delta_t\ ,
\end{equation}
where $\delta_t = O(1/\sqrt{t})$ bounds policy suboptimality, $O(\cdot)$ is the asymptotic upper bound. Under standard regularity conditions, these errors can be bounded by constants $C_\delta$, respectively, which yields:
\begin{equation}
    R(T)
\le
\sum_{i=1}^N \frac{1}{1-\gamma}
C_\delta
\sum_{t=1}^T \frac{1}{\sqrt{t}}
=
O\!\Big(\frac{N\sqrt{T}}{1-\gamma}\Big).
\end{equation}



\subsection{Verification of Latent Variables}\label{sec:verfi}
Despite the ability of multiple agents in TLVD to infer potential latent variables, further validation is necessary to ensure their plausibility and to achieve the traceability of the proposed method. Therefore, we leverage data sources (e.g., arXiv, Wikipedia, databases, and patients’ anonymized personal text reports) for verification, and feed the retrieved evidence back to the execution LLMs and the coordinator LLM to update their strategies and beliefs. 
\subsection{Complexity Analysis}
The space complexity of our proposed TLVD mainly depends on the belief networks, belief encoder, and mixing network. For the belief networks, the complexity is $\mathcal{O}(Nd{d}')$, where $d$ denotes the input dimension, and ${d}'$ denotes the hidden layer dimension; for the belief encoder, the complexity is $\mathcal{O}(N{{d}'}^2)$; and for the mixing network, the complexity is $\mathcal{O}({{d}'}^2)$. Therefore, the space complexity of TLVD can be summarized as $\mathcal{O}(N{{d}'}^2)$.

\section{Experiments}\label{sec:exp}
Our study primarily focuses on the following research questions: \textbf{RQ1}: How does the performance of TLVD compare with existing methods?
\textbf{RQ2}: What is the impact of model configurations on overall performance? \textbf{RQ3}: How does each component of TLVD affect the overall performance? \textbf{RQ4}: How do hyperparameter settings affect the performance of TLVD? \textbf{RQ5}: How does TLVD operate in real-world examples? \textbf{RQ6}: What is the impact of web data sources on TLVD, and what are the main causes of its failures?
\subsection{Experimental Setup}
\subsubsection{Datasets \& evaluations}
In this section, we evaluate our method on five
real-world datasets.  As shown in Table \ref{data-stcs}, we utilized the de-identified WCHSU-Cancer and WCHSU-Pain datasets from real hospitals scenarios. Additionally, we also use two generic domain benchmark datasets (Multitasking Behaviour Study \cite{himi2019multitasking} and Teacher’s Burnout Study \cite{byrne2013structural}). \textit{Please note that the WCHSU dataset is newly collected and has not been made publicly available online, making it unlikely to have been used in training any existing LLMs, and therefore posing no risk of data leakage}. The evaluation metrics are ACC, CAcc, and ECit. Detailed descriptions can be found in Appendix \ref{data}. For each dataset, we conduct five experiments with different random seeds and report the average performance.
\subsubsection{Baselines}
We compare four types of baselines: single LLMs, deep research agents, multi-agent platforms, and multi-LLM reasoning frameworks. The single LLMs mainly include GPT-5. The deep research agents include WideSearch \cite{wong2025widesearch}, Gemini-deepresearch, Openai-deepresearch, Qwen-deepresearch, and Doubao-deepresearch. The multi-agent platforms include Autogen \cite{wu2024autogen} and MiniMax, whereas the multi-LLM reasoning frameworks include CAMEL \cite{li2023camel}, Multiagent (Majority) \cite{zhangmore}, and Multiagent (Debate) \cite{du2023improving}.
\subsubsection{Implementation details}
In this section, the TLVD framework consists of one coordinator and two executor LLMs. During training, the episodes per task is set to 100, buffer size $ \left | \mathcal{D} \right |$ to 32, optimizer to Adam, learning rate to 0.001, discount factor to 0.99, entity dimension to 256, belief state dimension to 128, MLP hidden size to 256, and $\lambda_m$ to 0.1. $R_{max}$ is 1. For the early stopping mechanism, we utilize the same termination threshold settings as in previous studies \cite{yi2025from}. To ensure a fair comparison with baseline methods, we employed three identical models across these LLMs. For heterogeneous results, we also evaluated TLVD using different models, as shown in Table \ref{table_different_llms}. All evaluations were conducted under a zero-shot setting on a server equipped with an NVIDIA GeForce A6000 GPU with 48GB of memory. For the LLMs used to validate latent variables in web data sources, we employed the same LLMs as those used for the executors. For each causal validation query, we retrieved the top five most relevant texts from arXiv and Wikipedia each time. 
\subsection{RQ1: Comparisons and Analysis}
Table \ref{table-main} shows a performance comparison of different methods on the three WCHSU datasets. The results indicate that TLVD outperforms all baseline methods across all evaluation metrics. On average, in terms of ACC, TLVD achieves a 131.68\% improvement over MiniMax, a 124.30\% improvement over Autogen, and a 58.07\% improvement over OpenAI-deepresearch.
Moreover, when we constructed new datasets by randomly sampling the original data (results in Table \ref{table-main1}), TLVD still maintained the top performance: It outperformed the runner-up, OpenAI-deepresearch, with average improvement of 32.96\% in ACC, 239.72\% in CAcc, and 52.86\% in ECit. Notably, TLVD achieves performance improvements with fewer communication tokens compared to CAMEL, Multi-Agent Debate, and Multi-Agent Majority, which can be attributed to the design of our belief network (see Figure \ref{cost-parameter} in Appendix \ref{addition}).
\subsection{RQ2: Model Configuration Analysis}
To evaluate the impact of the coordinator LLM and execution LLM performance on the TLVD, and to investigate whether heterogeneous execution LLMs can also achieve a BNE, we conducted \underline{two types of experiments}: one pairing a strong coordinator LLM with weaker execution LLMs, and another pairing a weaker coordinator LLM with stronger execution LLMs. These experiments were further divided into homogeneous and heterogeneous execution groups for detailed analysis. To ensure a fair comparison, the coordinator LLM was consistently set to Kimi k2 32B across all experiments. For the heterogeneous execution group, we used the following configurations: LLaMA3.1 8B and Qwen2.5 7B; as well as another configuration consisting of Qwen 2.5 72B and LLaMA3.1 70B. For the homogeneous execution group, two configurations were tested: one with two weak models (LLaMA 3.1 8B) and another with two strong models (LLaMA 3.1 70B). As shown in Table \ref{table_different_llms}, stronger execution LLMs achieve BNE more efficiently by providing higher-quality answers. Additionally, heterogeneous models perform worse than homogeneous models due to the increased difficulty in reaching BNE.
\begin{table}[t]
\caption{Statistics of datasets.}
\centering
\resizebox{0.47\textwidth}{!}{
\begin{tabular}[t]{lccccc}
\toprule
\textbf{Dataset} &\textbf{\#Domain}&\textbf{\#Sample} & \textbf{\makecell[c]{\#Number of \\observed variables}}& \textbf{\makecell[c]{\#Number of \\latent variables}}\\
\midrule
\textbf{\textit{WCHSU-Cancer}} &Medical &200,000& 12/22 & 6/4  \\
\textbf{\textit{WCHSU-Pain}} &Medical &1,568&  16 & 4 \\
\textbf{\textit{\makecell[l]{Multitasking \\Behaviour Study}}} &Social Science&202& 9 & 4 \\
\textbf{\textit{\makecell[l]{Teacher’s \\Burnout Study}}} &Social Science&599& 32 & 11 \\
\bottomrule
\end{tabular}
}
\label{data-stcs}
\end{table}
\begin{table*}[t]
    \caption{Performance evaluation of different models.}
    \label{table-main}
    \centering
    \resizebox{0.99\textwidth}{!}{
    \newcommand{\tabincell}[2]{\begin{tabular}{@{}#1@{}}#2\end{tabular}}
    \centering
    \begin{tabular}{l||ccc|ccc|ccc}
     \toprule    
      &\multicolumn{3}{c|}{\bf WCHSU-Cancer ($n=12$)} &\multicolumn{3}{c|}{\bf WCHSU-Cancer ($n=22$)}&\multicolumn{3}{c}{\bf WCHSU-Pain}\\
      \textbf{Methods} &\textbf{ACC}&\textbf{CAcc}&\textbf{ECit}&\textbf{ACC}&\textbf{CAcc}&\textbf{ECit}&\textbf{ACC}&\textbf{CAcc}&\textbf{ECit}\\
     \midrule
     \midrule
        \textbf{GPT5} &0.134$\pm$\textit{\scriptsize 0.067} &0.100$\pm$\textit{\scriptsize 0.094} &0.031$\pm$\textit{\scriptsize 0.029}&0.150$\pm$\textit{\scriptsize 0.122}&0.116$\pm$\textit{\scriptsize 0.039}&0.079$\pm$\textit{\scriptsize 0.027} &0.150$\pm$\textit{\scriptsize 0.122}&0.095$\pm$\textit{\scriptsize 0.061}&0.095$\pm$\textit{\scriptsize 0.061}\\
    \midrule
        \textbf{Gemini-deepresearch}  &0.267$\pm$\textit{\scriptsize 0.081} &0.286$\pm$\textit{\scriptsize 0.046} &0.154$\pm$\textit{\scriptsize 0.024} &0.300$\pm$\textit{\scriptsize 0.100} &0.347$\pm$\textit{\scriptsize 0.042}&0.236$\pm$\textit{\scriptsize 0.029}&0.250$\pm$\textit{\scriptsize 0.000}&0.338$\pm$\textit{\scriptsize 0.031} &0.284$\pm$\textit{\scriptsize 0.026}\\
        \textbf{OpenAI-deepresearch} & 0.433$\pm$\textit{\scriptsize 0.082}&0.560$\pm$\textit{\scriptsize 0.049}&0.215$\pm$\textit{\scriptsize 0.019}&0.550$\pm$\textit{\scriptsize 0.100}&0.214$\pm$\textit{\scriptsize 0.165} &0.214$\pm$\textit{\scriptsize 0.089}&0.550$\pm$\textit{\scriptsize 0.100}&0.253$\pm$\textit{\scriptsize 0.050}&0.200$\pm$\textit{\scriptsize 0.039}\\
        \textbf{Qwen-deepresearch} &0.200$\pm$\textit{\scriptsize 0.066} &0.500$\pm$\textit{\scriptsize 0.000}&0.077$\pm$\textit{\scriptsize 0.000}&0.250$\pm$\textit{\scriptsize 0.000}&0.271$\pm$\textit{\scriptsize 0.029} &0.271$\pm$\textit{\scriptsize 0.029}&0.300$\pm$\textit{\scriptsize 0.100}&0.500$\pm$\textit{\scriptsize 0.158} &0.105$\pm$\textit{\scriptsize 0.033}\\
        \textbf{Doubao-deepresearch} &0.000$\pm$\textit{\scriptsize 0.000} &0.150$\pm$\textit{\scriptsize 0.050}&0.038$\pm$\textit{\scriptsize 0.070}&0.000$\pm$\textit{\scriptsize 0.000}&0.000$\pm$\textit{\scriptsize 0.000} &0.000$\pm$\textit{\scriptsize 0.000}&0.150$\pm$\textit{\scriptsize 0.122}&0.000$\pm$\textit{\scriptsize 0.000} &0.000$\pm$\textit{\scriptsize 0.000}\\
        \midrule
        \textbf{Autogen} &0.300$\pm$\textit{\scriptsize 0.125}&0.000$\pm$\textit{\scriptsize 0.000}&0.000$\pm$\textit{\scriptsize 0.000} &0.450$\pm$\textit{\scriptsize 0.100}&0.255$\pm$\textit{\scriptsize 0.036}&0.200$\pm$\textit{\scriptsize 0.029} &0.350$\pm$\textit{\scriptsize 0.122}&0.089$\pm$\textit{\scriptsize 0.044}&0.084$\pm$\textit{\scriptsize 0.042}\\
\textbf{MiniMax}&0.467$\pm$\textit{\scriptsize 0.067}&0.600$\pm$\textit{\scriptsize 0.000}& 0.231$\pm$\textit{\scriptsize 0.000}&0.300$\pm$\textit{\scriptsize 0.100}&0.192$\pm$\textit{\scriptsize 0.030}&0.171$\pm$\textit{\scriptsize 0.027}&0.300$\pm$\textit{\scriptsize 0.100}&0.071$\pm$\textit{\scriptsize 0.014}&0.053$\pm$\textit{\scriptsize 0.008}\\
    \midrule
        \textbf{\textsc{CAMEL}} & 0.267$\pm$\textit{\scriptsize 0.081} &0.000$\pm$\textit{\scriptsize 0.000} & 0.000$\pm$\textit{\scriptsize 0.000}&0.400$\pm$\textit{\scriptsize 0.122}&0.000$\pm$\textit{\scriptsize 0.000} &0.000$\pm$\textit{\scriptsize 0.000}&0.000$\pm$\textit{\scriptsize 0.000} &0.000$\pm$\textit{\scriptsize 0.000}&0.000$\pm$\textit{\scriptsize 0.000}\\
        \textbf{\textsc{Multi-Agent-Majority}} & 0.134$\pm$\textit{\scriptsize 0.067} &0.000$\pm$\textit{\scriptsize 0.000} & 0.000$\pm$\textit{\scriptsize 0.000}&0.200$\pm$\textit{\scriptsize 0.100}&0.000$\pm$\textit{\scriptsize 0.000} &0.000$\pm$\textit{\scriptsize 0.000}&0.000$\pm$\textit{\scriptsize 0.000} &0.000$\pm$\textit{\scriptsize 0.000}&0.000$\pm$\textit{\scriptsize 0.000}\\
        \textbf{\textsc{Multi-Agent Debate}} & 0.234$\pm$\textit{\scriptsize 0.081} &0.000$\pm$\textit{\scriptsize 0.000} & 0.000$\pm$\textit{\scriptsize 0.000}&0.350$\pm$\textit{\scriptsize 0.122}&0.000$\pm$\textit{\scriptsize 0.000} &0.000$\pm$\textit{\scriptsize 0.000}&0.200$\pm$\textit{\scriptsize 0.100} &0.000$\pm$\textit{\scriptsize 0.000}&0.000$\pm$\textit{\scriptsize 0.000}\\
        \midrule
        \textbf{\textsc{TLVD}} &\textbf{0.833$\pm$\textit{\scriptsize 0.000}}&\textbf{0.900$\pm$\textit{\scriptsize 0.033}} &\textbf{0.415$\pm$\textit{\scriptsize 0.015}}&\textbf{0.750$\pm$\textit{\scriptsize 0.000}}&\textbf{0.920$\pm$\textit{\scriptsize 0.040}}&\textbf{0.329$\pm$\textit{\scriptsize 0.014}}&\textbf{0.800$\pm$\textit{\scriptsize 0.100}}& \textbf{0.860$\pm$\textit{\scriptsize 0.049}}&\textbf{0.453$\pm$\textit{\scriptsize 0.026}}\\
    \bottomrule
    \end{tabular}}
\end{table*}
\begin{table*}[t]
    \caption{Performance evaluation of different models, where we randomly sample half of the original dataset.}
    \label{table-main1}
    \centering
     \resizebox{0.99\textwidth}{!}{
    \newcommand{\tabincell}[2]{\begin{tabular}{@{}#1@{}}#2\end{tabular}}
    \centering
    \begin{tabular}{l||ccc|ccc|ccc}
     \toprule    
      &\multicolumn{3}{c|}{\bf WCHSU ($n=12$)} &\multicolumn{3}{c|}{\bf WCHSU ($n=22$)}&\multicolumn{3}{c}{\bf WCHSU-Pain}\\
      \textbf{Methods} &\textbf{ACC}&\textbf{CAcc}&\textbf{ECit}&\textbf{ACC}&\textbf{CAcc}&\textbf{ECit}&\textbf{ACC}&\textbf{CAcc}&\textbf{ECit}\\
     \midrule
     \midrule
        \textbf{GPT5} &0.120$\pm$\textit{\scriptsize 0.098} &0.074$\pm$\textit{\scriptsize 0.042} &0.074$\pm$\textit{\scriptsize 0.042}&0.000$\pm$\textit{\scriptsize 0.000}&0.111$\pm$\textit{\scriptsize 0.070}&0.033$\pm$\textit{\scriptsize 0.021} &0.133$\pm$\textit{\scriptsize 0.163}&0.51$\pm$\textit{\scriptsize 0.0037}&0.36$\pm$\textit{\scriptsize 0.0056}\\
    \midrule
        \textbf{Gemini-deepresearch}  &0.240$\pm$\textit{\scriptsize 0.013} &0.084$\pm$\textit{\scriptsize 0.042} &0.084$\pm$\textit{\scriptsize 0.042} &0.200$\pm$\textit{\scriptsize 0.163} &0.160$\pm$\textit{\scriptsize 0.013}&0.160$\pm$\textit{\scriptsize 0.013}&0.266$\pm$\textit{\scriptsize 0.133}&0.000$\pm$\textit{\scriptsize 0.000} &0.000$\pm$\textit{\scriptsize 0.000}\\
        \textbf{OpenAI-deepresearch} & 0.600$\pm$\textit{\scriptsize 0.126}&0.388$\pm$\textit{\scriptsize 0.029}&0.347$\pm$\textit{\scriptsize 0.026}&0.667$\pm$\textit{\scriptsize 0.000}&0.180$\pm$\textit{\scriptsize 0.016} &0.180$\pm$\textit{\scriptsize 0.016}&0.600$\pm$\textit{\scriptsize 0.134}&0.318$\pm$\textit{\scriptsize 0.047}&0.318$\pm$\textit{\scriptsize 0.047}\\
        \textbf{Qwen-deepresearch} &0.440$\pm$\textit{\scriptsize 0.080}&0.225$\pm$\textit{\scriptsize 0.094}&0.095$\pm$\textit{\scriptsize 0.039}&0.200$\pm$\textit{\scriptsize 0.163}&0.289$\pm$\textit{\scriptsize 0.089}&0.087$\pm$\textit{\scriptsize 0.027}&0.266$\pm$\textit{\scriptsize 0.133}&0.000$\pm$\textit{\scriptsize 0.000}&0.000$\pm$\textit{\scriptsize 0.000}\\
        \textbf{Doubao-deepresearch} &0.000$\pm$\textit{\scriptsize 0.000} &0.000$\pm$\textit{\scriptsize 0.000}&0.000$\pm$\textit{\scriptsize 0.000}&0.133$\pm$\textit{\scriptsize 0.163}&0.000$\pm$\textit{\scriptsize 0.000} &0.000$\pm$\textit{\scriptsize 0.000}&0.067$\pm$\textit{\scriptsize 0.133}&0.000$\pm$\textit{\scriptsize 0.000} &0.000$\pm$\textit{\scriptsize 0.000}\\
        \midrule
        \textbf{Autogen} &0.480$\pm$\textit{\scriptsize 0.098}&0.000$\pm$\textit{\scriptsize 0.000}&0.000$\pm$\textit{\scriptsize 0.000} &0.333$\pm$\textit{\scriptsize 0.000} &0.186$\pm$\textit{\scriptsize 0.042} &0.173$\pm$\textit{\scriptsize 0.039}&0.400$\pm$\textit{\scriptsize 0.250}&0.000$\pm$\textit{\scriptsize 0.000}&0.000$\pm$\textit{\scriptsize 0.000}\\
\textbf{MiniMax}&0.560$\pm$\textit{\scriptsize 0.080}&0.000$\pm$\textit{\scriptsize 0.000}& 0.000$\pm$\textit{\scriptsize 0.000}&0.533$\pm$\textit{\scriptsize 0.164}&0.400$\pm$\textit{\scriptsize 0.122}&0.053$\pm$\textit{\scriptsize 0.016}&0.467$\pm$\textit{\scriptsize 0.267}&0.350$\pm$\textit{\scriptsize 0.062}&0.247$\pm$\textit{\scriptsize 0.044}\\
    \midrule
        \textbf{\textsc{CAMEL}} & 0.360$\pm$\textit{\scriptsize 0.080} &0.000$\pm$\textit{\scriptsize 0.000} & 0.000$\pm$\textit{\scriptsize 0.000}&0.000$\pm$\textit{\scriptsize 0.000}&0.000$\pm$\textit{\scriptsize 0.000} &0.000$\pm$\textit{\scriptsize 0.000}&0.266$\pm$\textit{\scriptsize 0.133} &0.000$\pm$\textit{\scriptsize 0.000}&0.000$\pm$\textit{\scriptsize 0.000}\\
        \textbf{\textsc{Multi-Agent-Majority}} & 0.160$\pm$\textit{\scriptsize 0.080} &0.000$\pm$\textit{\scriptsize 0.000} & 0.000$\pm$\textit{\scriptsize 0.000}&0.133$\pm$\textit{\scriptsize 0.163}&0.000$\pm$\textit{\scriptsize 0.000} &0.000$\pm$\textit{\scriptsize 0.000}&0.067$\pm$\textit{\scriptsize 0.133} &0.000$\pm$\textit{\scriptsize 0.000}&0.000$\pm$\textit{\scriptsize 0.000}\\
        \textbf{\textsc{Multi-Agent Debate}} & 0.320$\pm$\textit{\scriptsize 0.098} &0.000$\pm$\textit{\scriptsize 0.000} & 0.000$\pm$\textit{\scriptsize 0.000}&0.333$\pm$\textit{\scriptsize 0.211}&0.175$\pm$\textit{\scriptsize 0.061} &0.047$\pm$\textit{\scriptsize 0.016}&0.266$\pm$\textit{\scriptsize 0.133} &0.000$\pm$\textit{\scriptsize 0.000}&0.000$\pm$\textit{\scriptsize 0.000}\\
        \midrule
        \textbf{\textsc{TLVD}} &\textbf{0.800$\pm$\textit{\scriptsize 0.126}}&\textbf{0.886$\pm$\textit{\scriptsize 0.057}} &\textbf{0.467$\pm$\textit{\scriptsize 0.112}}&\textbf{0.734$\pm$\textit{\scriptsize 0.133}}&\textbf{0.914$\pm$\textit{\scriptsize 0.070}}&\textbf{0.350$\pm$\textit{\scriptsize 0.064}}&\textbf{0.933$\pm$\textit{\scriptsize 0.133}}&\textbf{0.900$\pm$\textit{\scriptsize 0.082}}&\textbf{0.412$\pm$\textit{\scriptsize 0.043}}\\
    \bottomrule
    \end{tabular}}
\end{table*}
\subsection{RQ3: Ablation Studies}
To evaluate the effectiveness of different components in TLVD, we design the following variants: \textbf{TLVD-v}: Removes the multi-agent LLM collaboration module and uses only a voting strategy. \textbf{TLVD-d}: Removes the multi-agent LLM collaboration module and uses only a debate strategy. \textbf{TLVD-I}: Uses a single LLM to generate latent variables and their semantics. \textbf{TLVD-R1}: Uses only the Action Likelihood Reward. \textbf{TLVD-R2}: Considers both the Action Likelihood Reward and the Uncertainty Reduction Reward. \textbf{TLVD-R3}: Considers the Action Likelihood Reward, the Uncertainty Reduction Reward, and the Collaborative Contribution Reward.
We conducted ablation experiments using WCHSU-Cancer (n=12) and WCHSU-Pain as examples. Additionally, we analyzed the impact of different LLM types on the results.

As shown in Figure \ref{fig:ablation} and Table \ref{table-llms} (see Appendix \ref{addition}), we have the following observations: (i) TLVD achieves the best performance when all components are included. (ii) The Uncertainty Reduction Reward and the Collaborative Contribution Reward have a significant impact on model performance. (iii) There are noticeable performance differences among different LLMs, with GPT-oss-120B and DeepSeek-v3 performing relatively better.
\subsection{RQ4: Parameter Sensitivity}
In the WCHSU-Cancer ($n=12$) and WCHSU-Pain datasets, we further investigate the impact of the number of execution LLMs $N$ and the discount factor $\gamma$ on model performance. As shown in Figure \ref{fig-parameter1}, the results indicate that when the number of execution LLMs exceeds six, performance gains become very limited, and in some cases, performance even declines. When using weaker LLMs (e.g., LLaMA 3.1 8B), the number of execution LLMs has little impact on model performance. In contrast, with stronger LLMs (e.g., LLaMA 3.1 70B), increasing the number of execution LLMs yields some performance improvements. We attribute this to the challenges faced by the coordinator LLM when managing too many execution LLMs, making it difficult to achieve effective coordination through information from additional agents. Moreover, as the discount factor $\gamma$ increases, model performance shows an upward trend.
\begin{table}[t]
    \centering
    \caption{Performance of different configurations.}
    \label{table_different_llms}
    \resizebox{0.48\textwidth}{!}{
    \newcommand{\tabincell}[2]{\begin{tabular}{@{}#1@{}}#2\end{tabular}}
    \begin{tabular}{l||ccc|ccc}
     \toprule    
      &\multicolumn{3}{c|}{\bf WCHSU-Cancer ($n=12$)} &\multicolumn{3}{c}{\bf WCHSU-Pain}\\
      \textbf{Methods} &\textbf{ACC}&\textbf{CAcc}&\textbf{ECit}&\textbf{ACC} &\textbf{CAcc}&\textbf{ECit}\\
     \midrule
     \midrule
        \textbf{Homo. (2$\times$ LLaMA3.1 8B)} &0.500&1.000 &0.115&0.500&1.000&0.474\\
        \textbf{Homo. (2$\times$ LLaMA3.1 70B)} &0.833&1.000&0.462&0.850&1.000&0.737 \\
        \textbf{Hetero. (LLaMA3.1 8B, Qwen2.5 7B)} &0.333&1.000 &0.346&0.250&1.000 &0.421\\
        \textbf{Hetero. (LLaMA3.1 70B, Qwen2.5 72B)} &0.500&1.000&0.346 &0.750&1.000 &0.632\\
        \midrule
        \textbf{TLVD} &0.833 &0.900&0.415&0.800 &0.886&0.453\\
        \textbf{LLAMA-3.1-8B (Zero-shot)} &0.000 &0.000&0.000&0.250&0.191&0.057\\
    \bottomrule
    \end{tabular}}
\end{table}
\subsection{RQ5: Case Studies}
We take the specific latent variable $L_1$ from the WCHSU-Cancer ($n$=22) dataset as an example to demonstrate the latent variables identified by our method and the supporting evidence. As shown in Figure \ref{fig:casestudy}, Executors 1 and 2 output $L_1$ and its semantics based on their own beliefs and observations. Through the latent variable verification process, retrieved information is returned. The coordinator aggregates all information to infer the final latent variables and simultaneously outputs the corresponding causal evidence.
\begin{figure}[t] 
\includegraphics[width=0.48\textwidth]{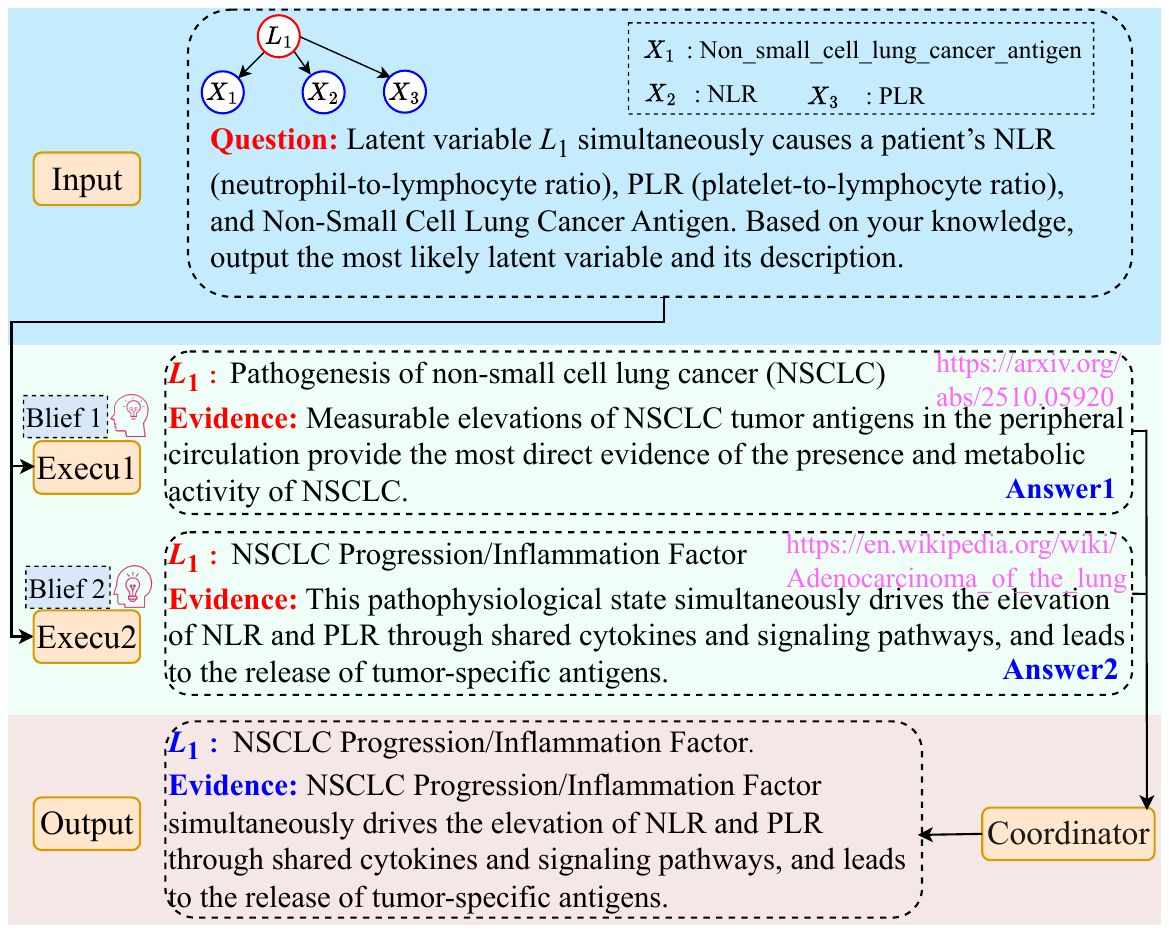}
  \centering
  \caption{Case study.}
  \label{fig:casestudy}
  \Description{figure-failure}
\end{figure}
\subsection{RQ6: Other In-Depth Analysis}
\subsubsection{Impact of Data Sources}
We further analyzed the impact of different data sources on TLVD to validate its applicability under varying conditions. We explored three scenarios on the WCHSU-Pain dataset: the W/O ARR model excludes literature from arXiv, the W/O WIKI model excludes literature from WIKIPEDIA, and the W/O DB model excludes medical-related knowledge graphs. As shown in Figure \ref{fig:data_source}, we observe that articles on arXiv contain more medically relevant causal information, which aligns well with reality and is consistent with existing studies \cite{chen2024rarebench}. The influence of the databases on our model is relatively small, which is reasonable since the data we use contains knowledge that is not publicly available and is rarely stored or used in existing databases.
\begin{figure}[t] 
\includegraphics[width=0.47\textwidth]{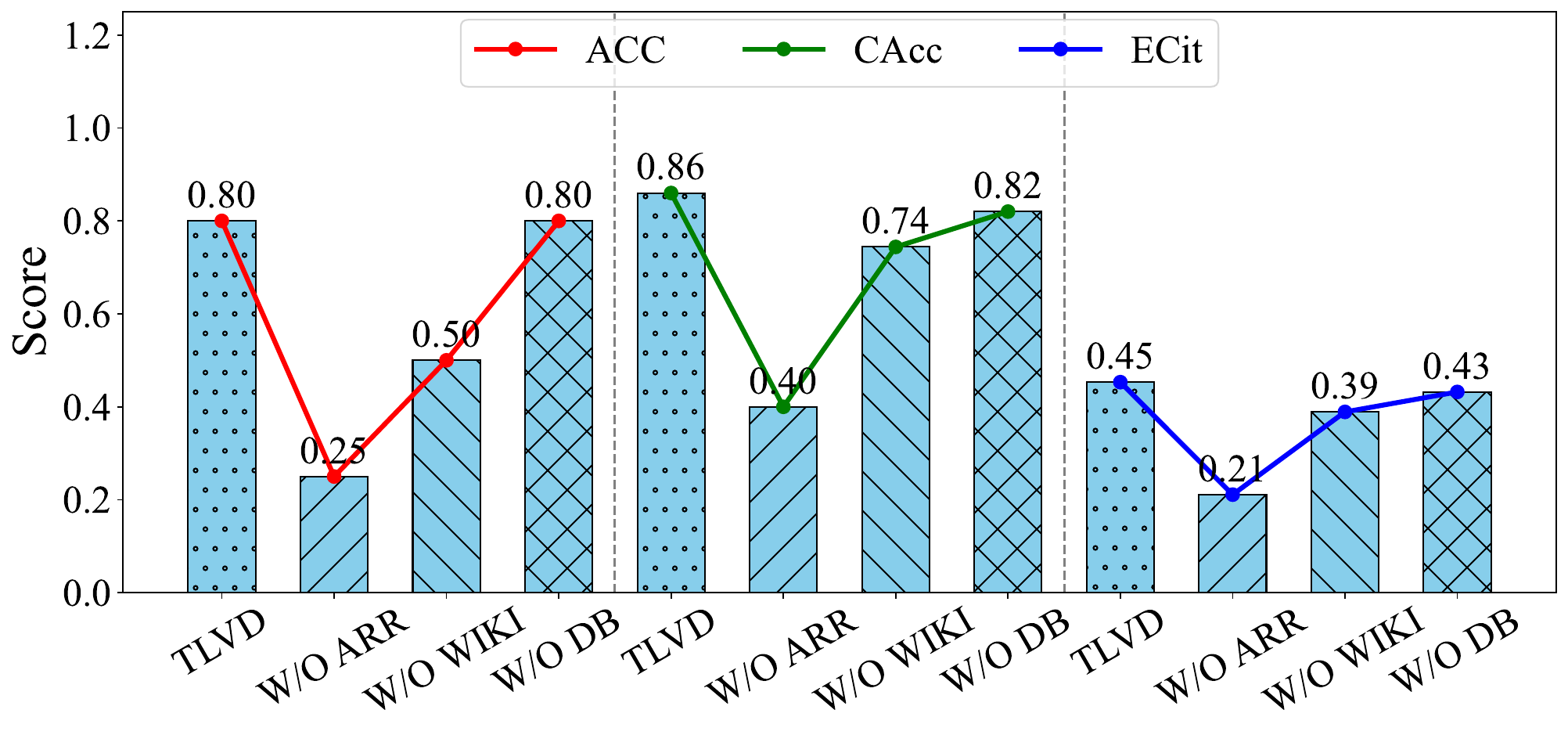}
  \centering
  \caption{Impact of data sources on model performance.}
  \label{fig:data_source}
  \Description{figure-failure}
\end{figure}
\begin{figure}[t] 
\includegraphics[width=0.48\textwidth]{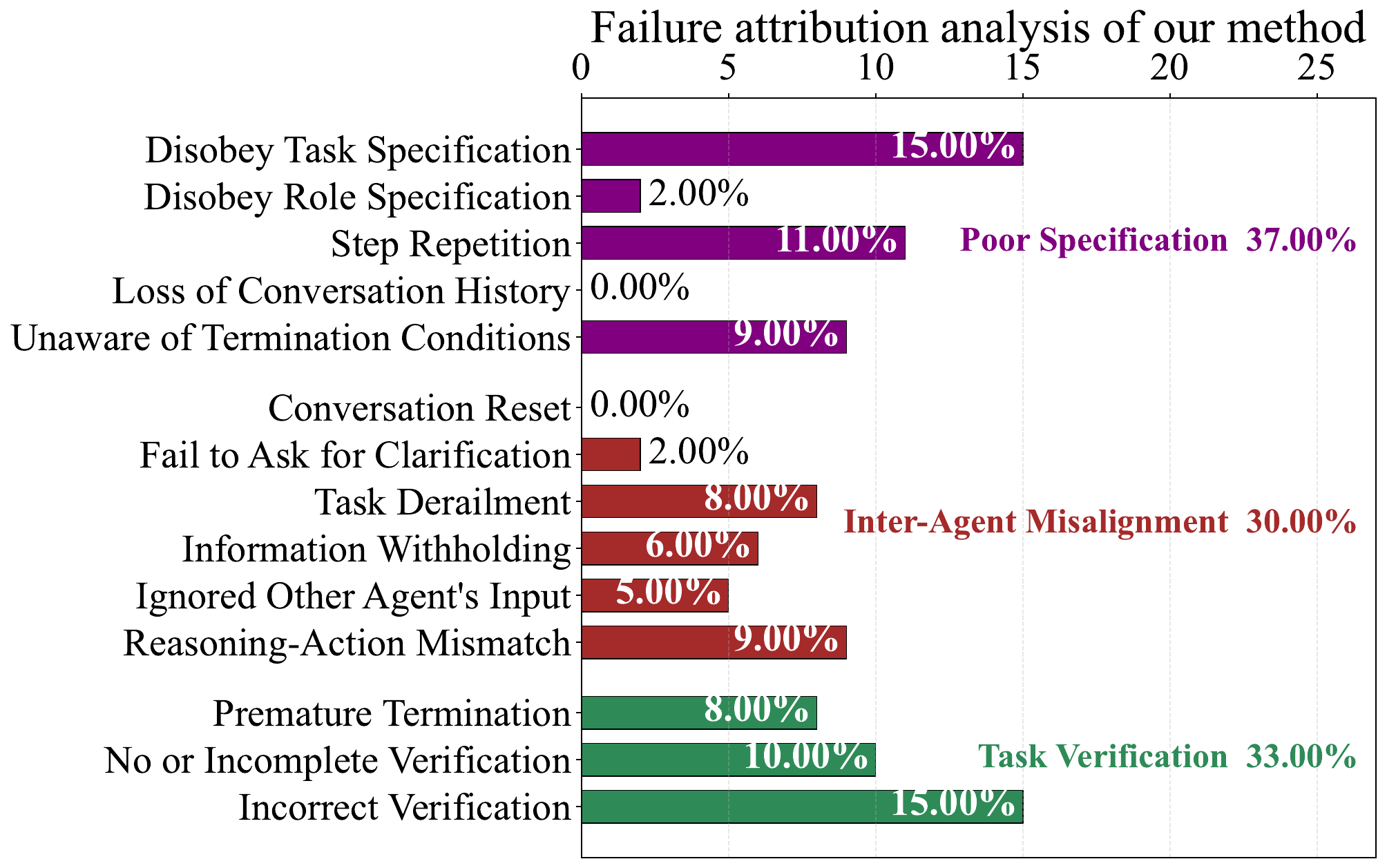}
  \centering
  \caption{Failure Attribution of Our Method. Note that the vertical axis represents specific failure modes, and the horizontal axis represents their respective proportions.}
  \label{fig:failure}
\end{figure}
\subsubsection{Error Analysis}
To further dissect our approach, we employed the MAST taxonomy \cite{cemri2025multi} to attribute failures in multi-agent systems. Through the analysis of 100 execution traces, we computed the frequency of occurrence for each failure mode and category. Detailed definitions and examples for each failure mode can be found in existing studies \cite{cemri2025multi}. As shown in Figure \ref{fig:failure}, the primary causes of multi-agent system failures are Specification Issues and Task Verification. For Specification Issues, failures stem from system design decisions and incomplete or ambiguous prompts, mainly including vague task instructions and unclear role definitions. For Task Verification, failures are primarily related to output quality control, such as insufficient verification or premature task termination.
\section{Conclusion}\label{sec:conclusion}
In this work, we propose TLVD, a multi-LLM collaborative framework for traceable latent variable discovery. TLVD first constructs a hierarchical coordination mechanism for multi-agent LLMs using game theory and reinforcement learning, enabling multiple execution LLMs to perform distributed reasoning under the guidance of a coordinator LLM to infer latent variables and their semantic meanings. Subsequently, the framework conducts causal validation of the inferred latent variables using multiple real-world data sources. Finally, we conduct extensive experiments on three real hospital datasets that we constructed, as well as two benchmark datasets, demonstrating the effectiveness of the TLVD in latent variable discovery.
\section{Acknowledgements}
This research is partially supported by funding from Xiangjiang Laboratory (25XJ02002), the National Natural Science Foundation of China (62376228, 62376227), the Science and Technology Innovation Program of Hunan Province (2024RC4008), the China Postdoctoral Science Foundation (2025M770766), Sichuan Provincial Postdoctoral Research Project Special Funding (TB2025043) and Sichuan Science and Technology Program (2023NSFSC0032).  Carl
Yang is not supported by any funds from China.

\bibliographystyle{ACM-Reference-Format}
\bibliography{sample-base}

\appendix
\section{Reproducibility}\label{data}
In this section, we provide more details about the datasets and evaluation metrics
to facilitate the reproducibility of the results. Our code is available at \url{https://github.com/HYJ9999/TLVD.git}.

\textbf{WCHSU-Cancer}: The WCHSU-Cancer dataset comes from the health management center of a large hospital in Asia, focusing on early screening for lung cancer, and includes health check-up data from a total of 200,000 participants. The original dataset contains 230 variables. WCHSU-Cancer (n=22) was curated by scoring the correlation between the original variables and the lung cancer variable of interest using LLMs, with scores ranging from 0 to 5. After filtering based on scores greater than 5, 22 relevant variables were selected. WCHSU-Cancer (n=12), on the other hand, was screened directly by lung cancer specialists from the same hospital.

\textbf{WCHSU-Pain}: The WCHSU-Pain comes from the same large hospital in Asia and focuses on postoperative pain in patients. It includes perioperative and postoperative analgesic usage data from a total of 1,568 patients. The dataset contains 16 variables. Please noted that all datasets used were properly pre-processed (including the de-identification of all patient information) and IRB-approved.

Next, we present the detailed computation of the evaluation metrics:
(1) ACC represents the proportion of latent variables correctly predicted by the model. For the WCHSU dataset which lacks ground-truth, we invited five experts from relevant fields at Asia's largest hospital to perform a consistency evaluation of the latent variables inferred by the model. Specifically, five experts assessed the latent variables and the authenticity of the evidence based on their professional experience; the consensus reached through consultation was adopted as the ground truth for this experiment.
(2) $CAcc=\frac{n}{EG}$, $n$ is the number of edges with real evidence, and $EG$ is the number of edges for which evidence is found. (3) $ECit=\frac{n}{EA}$, $EA$ is the number of edges in the causal graph where at least one of the nodes is a latent variable.


\section{Theoretical Proof}

\subsection{Proof of Theorem \ref{the1}
}\label{app-them1}
\begin{proof}
 We aim to prove the existence of a Bayesian Nash Equilibrium (BNE) in our multi-agent LLM framework under the specified conditions. Following existing studies, our proof primarily proceeds by verifying the conditions of Glicksberg’s Fixed Point Theorem, which guarantees the existence of a fixed point in continuous games with finite-dimensional strategy spaces.

\textbf{Step 1: Define the Best Response Correspondence} For each agent $i$, define the best response correspondence $BR_i$ as:
\begin{equation*}
    BR_i(\pi_{-i}) = \{\pi_i \in \Pi_i \mid \underset{\pi_i}{\text{ max}}\; U_i(\theta_i, \pi_i, \pi_{-i})\},
\end{equation*}
where $\Pi_i$ is the set of all admissible strategies for agent $i$, and $\pi_{-i}$ denotes the strategies of all other agents.

\textbf{Step 2: Verify the Conditions of Glicksberg’s Fixed Point Theorem} To apply Glicksberg’s theorem, we need to verify the following conditions for each agent $i$:

\textbf{(i) Compactness and Convexity of Strategy Space}: The strategy space $\Pi_i$ consists of all measurable functions from the type space $\Theta_i$ to the action space $A_i$. By Tychonoff's theorem, since $\Theta_i$ and $A_i$ are compact metric spaces, $\Pi_i$ is also compact. Convexity follows similarly. \textbf{(ii) Continuity of the Payoff Function}: For fixed $\theta_i$, since $U_i$ is continuous in actions and strategies map continuously from types to actions, the composition\\ $U_i(\theta_i, \pi_i(\theta_i), \pi_{-i}(\theta_{-i}))$ is continuous in $(\pi_i, \pi_{-i})$. \textbf{(III) Quasi-Conc\\avity of the Payoff Function}: For each $\theta_i$ and $\pi_{-i}$, the mapping $\pi_i \mapsto U_i(\theta_i, \pi_i, \pi_{-i})$ is linear in the space of mixed strategies, ensuring quasi-concavity.

\textbf{Step 3: Establish Upper Hemicontinuity and Non-Empty, Convex-Valuedness of the Best Response Correspondence} We need to show that $BR_{i}\left ( \pi _{-i} \right ) $ is upper hemicontinuous with non-empty, convex values.

\textbf{(i) Non-Empty and Convex Values}:
    The payoff function $U_i$ is continuous and quasi-concave in $\pi_i$. By the Weierstrass Theorem, the extreme value must exist (and thus be non-empty); convexity follows from the quasi-concavity of $U_i$.
\textbf{(ii) Upper Hemicontinuity}
According to Berge’s Maximum Theorem, if: (1)The constraint set $\Pi_i$ is compact. (2) The objective function $U_i$ is continuous in $(\pi_i, \pi_{-i})$, then the best response correspondence $BR_i(\pi_{-i})$ is guaranteed to be upper hemicontinuous.

\textbf{Step 4: Application of Glicksberg’s Fixed Point Theorem}
After verifying the above conditions, applying Glicksberg’s Fixed Point Theorem in our framework leads to the following conclusion: if each player’s strategy set is compact and convex, and their payoff functions are continuous and quasi-concave in their own strategies, then the game has at least one mixed-strategy Nash equilibrium. That is, there exists a strategy profile $\pi^* = (\pi^*_1, \pi^*_2, \dots, \pi^*_N)$ such that for each agent $i$:
$\pi^*_i \in BR_i(\pi^*_{-i})$,
meaning that, given their beliefs about the types and strategies of other agents, no agent has an incentive to unilaterally deviate to improve their expected payoff. This strategy profile constitutes a Bayesian Nash Equilibrium in our TVLD framework.

\end{proof}
\subsection{Proof of Convergence to BNE
}\label{convergence}
\begin{proof}
We aim to show that by minimizing the TD loss of each agent’s Q-network, the agents’ strategies converge to a BNE.

\textbf{Core Assumptions}:

1. The Q-network $Q_i(s, a_i; \theta_i)$ is parameterized by prompt embeddings $\theta_i$, and the mapping from $\theta_i$ to $Q_i$ is continuously differentiable.
2. The exploration strategy ensures sufficient coverage of the state–action space.
3. The loss function $L_i(\theta_i)$ is convex, or its gradient is Lipschitz continuous with respect to $\theta_i$.
4. The gradient $\nabla_{\theta_i} L_i(\theta_i)$ is Lipschitz continuous.
5. The learning rate $\eta_t$ satisfies the Robbins–Monro conditions: $\sum_{t=1}^\infty \eta_t = \infty$ and $\sum_{t=1}^\infty \eta_t^2 < \infty$.

\textbf{Step 1: Defining the TD Loss Function}
For agent $i$, the TD loss is:
\begin{equation*}
    L_i(\theta_i) = \mathbb{E}_{(s,a_i,r_i,s') \sim D_i} \Big[ \big(r_i + \gamma \max_{a'_i} Q_i(s', a'_i; \theta_i^-) - Q_i(s, a_i; \theta_i)\big)^2 \Big],
\end{equation*}
which measures the discrepancy between the predicted Q-value and the target Q-value based on the reward and estimated optimal future Q-value.

\textbf{Step 2: Convergence of Gradient Descent with TD Loss}
Under Assumptions 1--4, stochastic gradient descent converges almost surely to a stationary point $\theta_i^*$ of $L_i(\theta_i)$:
\[
\lim_{t \to \infty} \theta_i^{(t)} = \theta_i^* \quad \text{and} \quad \nabla_{\theta_i} L_i(\theta_i^*) = 0.
\]
This follows from standard stochastic approximation theory. The gradient condition implies:
\begin{equation*}
\scalebox{0.96}{$
\begin{aligned}
    \mathbb{E}_{(s,a_i,r_i,s') \sim D_i} \Big[ 
        &\big(r_i + \gamma \max_{a'_i} Q_i(s', a'_i; \theta_i^-) 
        - Q_i(s, a_i; \theta_i^*)\big) \\
        &\cdot \nabla_{\theta_i} Q_i(s, a_i; \theta_i^*) 
    \Big] = 0
\end{aligned}
$}
\end{equation*}
If the function approximator is sufficiently expressive, this implies that the TD error vanishes in expectation:
\[
Q_i(s, a_i; \theta_i^*) = \mathbb{E} \left[ r_i + \gamma \max_{a'_i} Q_i(s', a'_i; \theta_i^*) \mid s, a_i \right].
\]
Thus, $Q_i(\cdot; \theta_i^*)$ satisfies the Bellman optimality equation. This ensures that the agent’s policy $\pi_i(a_i | s; \theta_i^*)$ is a best response to the current policies of the other agents, as it maximizes expected cumulative rewards.

\textbf{Step 3: Establishing Bayesian Nash Equilibrium}
Since each agent’s policy is a best response to others, the set of policies ${\pi_i^*}$ constitutes a BNE. At this equilibrium, each agent maximizes its expected utility given its beliefs about other agents’ types and strategies, thus fulfilling the definition of BNE.
\end{proof}
\section{Detailed Proofs}
\subsection{Proof of Lemma \ref{them2}}\label{proof-them2}
\begin{proof}
    Consider the value functions under policies $\pi'$ and $\pi$:
\begin{equation*}
\left\{
\begin{aligned}
V_i^{\pi'}(s) &= \mathbb{E}_{\pi'}\left[ \sum_{k=0}^{\infty} \gamma^k r_i(s_k, a_k)\mid s_0 = s \right], \\
V_i^\pi(s) &= \mathbb{E}_\pi\left[ \sum_{k=0}^{\infty} \gamma^k r_i(s_k, a_k)\mid s_0 = s \right].
\end{aligned}
\right.
\end{equation*}
Subtracting the two equations yields:
\begin{equation*}
\begin{aligned}
    V_{\pi'}(s) - V_\pi(s) &= \mathbb{E}_{\pi'}\left[ \sum_{k=0}^{\infty} \gamma^k r_i(s_k, a_k) \right] - \mathbb{E}_\pi\left[ \sum_{k=0}^{\infty} \gamma^k r_i(s_k, a_k) \right]\\
&= \sum_{k=0}^{\infty} \gamma^k \left( \mathbb{E}_{s_k \sim d_k^{\pi'}}[r_i(s_k, a_k)] - \mathbb{E}_{s_k \sim d_k^\pi}[r_i(s_k, a_k)] \right).
\end{aligned}
\end{equation*}
Assuming the difference in state distributions is negligible, we focus on action differences. Using the Q-function definition:
\begin{equation*}
    Q_\pi(s, a_i, a_{-i}) = r_i(s, a_i, a_{-i}) + \gamma \mathbb{E}_{s' \sim P}[V_\pi(s')],
\end{equation*}
we can write:
\begin{equation*}
    V_{\pi'}(s) - V_\pi(s) = \sum_{k=0}^{\infty} \gamma^k \mathbb{E}_{s_k \sim d_k^{\pi'}}\left[ Q_\pi(s_k, a'_k) - V_\pi(s_k) \right].
\end{equation*}
Since $V_\pi(s_k) = \mathbb{E}_{a_k \sim \pi(s_k)}[Q_\pi(s_k, a_k)]$, we have:
\begin{equation*}
\begin{aligned}
\begin{aligned}
V_{\pi'}(s) - V_\pi(s) &= \sum_{k=0}^{\infty} \gamma^k \mathbb{E}_{s_k \sim d_k^{\pi'}} \left[ \mathbb{E}_{a'_k \sim \pi'(s_k)} Q_\pi(s_k, a'_k) \right. \\
&\quad - \left. \mathbb{E}_{a_k \sim \pi(s_k)} Q_\pi(s_k, a_k) \right].
\end{aligned}
\end{aligned}
\end{equation*}
Switching the order of expectations and summing over $k$, we get:
\begin{equation*}
    V_{\pi'}(s) - V_\pi(s) = \frac{1}{1 - \gamma} \mathbb{E}_{s \sim d_{\pi'}} \left[ Q_\pi(s, a'_i, a'_{-i}) - Q_\pi(s, a_i, a_{-i}) \right].
\end{equation*}
\end{proof}
\subsection{Bounding the Bayesian Regret}\label{regret}
To prove the regret bound in our framework is
$R(T) = O\left( \frac{N \sqrt{T}}{1 - \gamma} \right)$,
we employ the given lemma:
\begin{equation*}
\scalebox{0.90}{$
    V_i^*(s_t) - V_i^{\pi_t}(s_t) = \frac{1}{1-\gamma} \mathbb{E}_{a_i^{*t}, a_{-i}^{*t}, a_i^t, a_{-i}^t} \left[ Q_i^{\pi_t}(s_t, a_i^{*t}, a_{-i}^{*t}) - Q_i^{\pi_t}(s_t, a_i^t, a_{-i}^t) \right]$},
\end{equation*}
and combine it with the convergence properties of policy gradient methods.
The detailed derivation is as follows:
\subsubsection{Advantage Function Bound}
From the convergence theory of policy gradient methods (based on online convex optimization) \cite{hazan2016introduction}, for each agent $i$, state $s_t$ and optimal action $a_i^{*t}$, the expected advantage function satisfies 
\begin{equation}\label{eq1}
    \mathbb{E} \left[ A_i^{\pi_t}(s_t, a_i^{*t}) \right] \le \frac{C}{\sqrt{t}}\ ,
\end{equation}
where $A_i^{\pi_t}(s_t, a_i^{*t}) = Q_i^{\pi_t}(s_t, a_i^{*t}) - V_i^{\pi_t}(s_t) )$,
and $C=O\left ( \frac{R_{\max}}{1-\gamma}  \right ) > 0$ is a constant depending on the reward bound $R_{\max}=1$ and the geometry of the policy space.

\subsubsection{Application of the Lemma}
According to the lemma above, the value function gap is linked to the expected advantage function:
\begin{equation}\label{eq2}
    V_i^*(s_t) - V_i^{\pi_t}(s_t) = \frac{1}{1-\gamma} \mathbb{E} \left[ A_i^{\pi_t}(s_t, a_i^{*t}) \right].
\end{equation}
This follows from the linearity of expectation and the determinism of the optimal action (i.e., when $a_i^{*t}$ is fixed, $\mathbb{E}_{a_i^{*t}}[A_i^{\pi_t}(s_t, a_i^{*t})] = A_i^{\pi_t}(s_t, a_i^{*t})$).

\subsubsection{Bound on Value Function Difference}
Combining Equation \ref{eq1} and Equation \ref{eq2}, we obtain
$V_i^*(s_t) - V_i^{\pi_t}(s_t) \le \frac{1}{1-\gamma} \cdot \frac{C}{\sqrt{t}}$.

\subsubsection{Individual Agent Regret Summation}
The cumulative regret of agent $i$ over $T$ steps is
\begin{equation*}
    R_i(T) = \mathbb{E} \left[ \sum_{t=1}^T \left( V_i^*(s_t) - V_i^{\pi_t}(s_t) \right) \right]
\le \frac{C}{1-\gamma} \sum_{t=1}^T \frac{1}{\sqrt{t}}.
\end{equation*}

\subsubsection{Bounding the Harmonic Sum}
Using a standard integral bound on the partial harmonic series with exponent $1/2$: $
\sum_{t=1}^T \frac{1}{\sqrt{t}} \le 2 \sqrt{T}$.

\subsubsection{Global Regret Bound}

The total regret is the sum over all agents:
\begin{equation*}
    R(T) = \sum_{i=1}^N R_i(T) \le \sum_{i=1}^N \frac{2C \sqrt{T}}{1-\gamma} = \frac{2 N C \sqrt{T}}{1 - \gamma} = O\left( \frac{N \sqrt{T}}{1 - \gamma} \right).
\end{equation*}
Note that this proof relies on the convergence property of policy gradient methods, which ensures that the advantage function $A_i^{\pi_t}(s_t, a^{*t})$ decreases at a rate of $O(1/\sqrt{t})$. This convergence rate can be derived from online convex optimization theory, such as mirror descent.

\section{More experimental results}\label{addition}
We conducted additional experiments on two benchmark datasets with ground-truth labels: Multitasking Behaviour and Teacher’s Burnout Study. As shown in Table \ref{table-bench}, our method still achieves the best performance. Although these two datasets might have been included in the training data of LLMs, all the baselines we compared against are also LLM-based, making the comparison reasonably fair and still demonstrating the effectiveness of our method.

Additionally, to further investigate the impact of different causal discovery algorithms on our framework, we employed Hier.rank \cite{huang2022latent} to uncover latent causal graph structures. As shown in Table \ref{table-hier}, the results indicate that our method is robust and maintains optimal performance across different causal discovery algorithms. Although some edges still lack explicit causal evidence, based on the identifiability guarantees of the Hier.rank and RLCD \cite{dong2024versatile} algorithms, the latent variables inferred by our framework are also highly likely to be correct. The results of the ablation experiments are shown in Table \ref{table-llms} and Figure \ref{fig:ablation}. See Figure \ref{fig-parameter1} for the parameter analysis.

Finally, we also analyzed the token consumption of our method across different modules and sample sizes. As shown in Figure \ref{cost-parameter}, the token consumption is higher during the latent variable validation phase, primarily due to the processing of papers and Wiki texts containing rich textual content. In contrast, the latent variable proposal phase requires relatively few tokens, mainly thanks to modeling agent interactions via the belief network. Although our method consumes slightly more tokens than the baseline models, this is acceptable given the performance improvements achieved. Moreover, the token usage of our method is primarily determined by the number of latent variables and the size of their corresponding Markov blankets.

\begin{table}[t]
    \caption{Performance comparison on benchmark datasets.}
    \label{table-bench}
    \centering
    \resizebox{0.48\textwidth}{!}{
    \newcommand{\tabincell}[2]{\begin{tabular}{@{}#1@{}}#2\end{tabular}}
    \centering
    \resizebox{\textwidth}{!}{
    \begin{tabular}{l||ccc|ccc}
     \toprule    
      &\multicolumn{3}{c|}{\bf Multitasking Behaviour} &\multicolumn{3}{c}{\bf Teacher’s Burnout Study}\\
      \textbf{Methods} &\textbf{ACC}&\textbf{CAcc}&\textbf{ECit}&\textbf{ACC}&\textbf{CAcc}&\textbf{ECit}\\
     \midrule
     \midrule
        \textbf{GPT5} &0.250$\pm$\textit{\scriptsize 0.047} &0.083$\pm$\textit{\scriptsize 0.053} &0.083$\pm$\textit{\scriptsize 0.053}&0.145$\pm$\textit{\scriptsize 0.045}&0.069$\pm$\textit{\scriptsize 0.029}&0.032$\pm$\textit{\scriptsize 0.013}\\
    \midrule
        \textbf{Gemini-deepresearch}  &\textbf{0.500$\pm$\textit{\scriptsize 0.000}} &0.657$\pm$\textit{\scriptsize 0.070} &\textbf{0.383$\pm$\textit{\scriptsize 0.041}} &0.364$\pm$\textit{\scriptsize 0.100} &0.315$\pm$\textit{\scriptsize 0.038}&0.146$\pm$\textit{\scriptsize 0.017}\\
        \textbf{OpenAI-deepresearch} &0.350$\pm$\textit{\scriptsize 0.122}&0.150$\pm$\textit{\scriptsize 0.062}&0.150$\pm$\textit{\scriptsize 0.062}&0.564$\pm$\textit{\scriptsize 0.068}&0.762$\pm$\textit{\scriptsize 0.049}&0.354$\pm$\textit{\scriptsize 0.023}\\
        \textbf{Qwen-deepresearch} &0.050$\pm$\textit{\scriptsize 0.100}&0.000$\pm$\textit{\scriptsize 0.000}&0.000$\pm$\textit{\scriptsize 0.000}&0.164$\pm$\textit{\scriptsize 0.068}&0.154$\pm$\textit{\scriptsize 0.024}&0.071$\pm$\textit{\scriptsize 0.011}\\
        \textbf{Doubao-deepresearch} &0.000$\pm$\textit{\scriptsize 0.000} &0.000$\pm$\textit{\scriptsize 0.000}&0.000$\pm$\textit{\scriptsize 0.000}&0.127$\pm$\textit{\scriptsize 0.045}&0.131$\pm$\textit{\scriptsize 0.039} &0.061$\pm$\textit{\scriptsize 0.018}\\
        \midrule
        \textbf{Autogen} & 0.050$\pm$\textit{\scriptsize 0.100}&0.000$\pm$\textit{\scriptsize 0.000} &0.000$\pm$\textit{\scriptsize 0.000} &0.255$\pm$\textit{\scriptsize 0.068} &0.208$\pm$\textit{\scriptsize 0.039} &0.096$\pm$\textit{\scriptsize 0.018} \\
\textbf{MiniMax}&0.150$\pm$\textit{\scriptsize 0.122} &0.000$\pm$\textit{\scriptsize 0.000} &0.000$\pm$\textit{\scriptsize 0.000} &0.327$\pm$\textit{\scriptsize 0.109} &0.300$\pm$\textit{\scriptsize 0.051} &0.139$\pm$\textit{\scriptsize 0.026} \\
    \midrule
                \textbf{\textsc{CAMEL}} & 0.050$\pm$\textit{\scriptsize 0.100}&0.000$\pm$\textit{\scriptsize 0.000}&0.000$\pm$\textit{\scriptsize 0.000}&0.236$\pm$\textit{\scriptsize 0.072}&0.192$\pm$\textit{\scriptsize 0.024} &0.089$\pm$\textit{\scriptsize 0.011} \\
                \textbf{\textsc{Multi-Agent-Majority}} & 0.100$\pm$\textit{\scriptsize 0.122} &0.167$\pm$\textit{\scriptsize 0.105}&0.083$\pm$\textit{\scriptsize 0.053}&0.273$\pm$\textit{\scriptsize 0.141}&0.185$\pm$\textit{\scriptsize 0.029} &0.086$\pm$\textit{\scriptsize 0.013} \\
            \textbf{\textsc{Multi-Agent Debate}} & 0.000$\pm$\textit{\scriptsize 0.000} &0.000$\pm$\textit{\scriptsize 0.000}&0.000$\pm$\textit{\scriptsize 0.000}&0.291$\pm$\textit{\scriptsize 0.134}&0.162$\pm$\textit{\scriptsize 0.015} &0.075$\pm$\textit{\scriptsize 0.007}\\
            \midrule
        \textbf{\textsc{TLVD}} &\textbf{0.500$\pm$\textit{\scriptsize 0.000}}&\textbf{0.820$\pm$\textit{\scriptsize 0.095}} &0.283$\pm$\textit{\scriptsize 0.041}&\textbf{0.582$\pm$\textit{\scriptsize 0.045}}&\textbf{0.769$\pm$\textit{\scriptsize 0.049}}&\textbf{0.357$\pm$\textit{\scriptsize 0.023}}\\
    \bottomrule
    \end{tabular}}}
\end{table}
\begin{table}[t]
    \centering
    \caption{Performance comparison of models using the Hier. rank  \cite{huang2022latent} method.}
    \label{table-hier}
    \resizebox{0.48\textwidth}{!}{
    \newcommand{\tabincell}[2]{\begin{tabular}{@{}#1@{}}#2\end{tabular}}
    \begin{tabular}{l||ccc|ccc}
     \toprule    
      &\multicolumn{3}{c|}{\bf WCHSU-Cancer ($n=12$)} &\multicolumn{3}{c}{\bf WCHSU-Pain}\\
      \textbf{Methods} &\textbf{ACC}&\textbf{CAcc}&\textbf{ECit}&\textbf{ACC} &\textbf{CAcc}&\textbf{ECit}\\
     \midrule
     \midrule
       \textbf{GPT5} &0.200$\pm$\textit{\scriptsize 0.245} &0.125$\pm$\textit{\scriptsize 0.065} &0.125$\pm$\textit{\scriptsize 0.065}&0.300$\pm$\textit{\scriptsize 0.187}&0.100$\pm$\textit{\scriptsize 0.033}&0.100$\pm$\textit{\scriptsize 0.033}\\
    \midrule
        \textbf{Gemini-deepresearch}  &0.100$\pm$\textit{\scriptsize 0.200} &0.000$\pm$\textit{\scriptsize 0.000} &0.000$\pm$\textit{\scriptsize 0.000} &0.100$\pm$\textit{\scriptsize 0.122} &0.191$\pm$\textit{\scriptsize 0.021}&0.163$\pm$\textit{\scriptsize 0.018}\\
        \textbf{OpenAI-deepresearch} & 0.500$\pm$\textit{\scriptsize 0.316}&0.250$\pm$\textit{\scriptsize 0.258}&0.250$\pm$\textit{\scriptsize 0.135}&0.450$\pm$\textit{\scriptsize 0.100}&0.450$\pm$\textit{\scriptsize 0.041} &0.200$\pm$\textit{\scriptsize 0.018}\\
        \textbf{Qwen-deepresearch} &0.200$\pm$\textit{\scriptsize 0.245}&0.217$\pm$\textit{\scriptsize 0.027}&0.208$\pm$\textit{\scriptsize 0.000}&0.300$\pm$\textit{\scriptsize 0.100}&0.443$\pm$\textit{\scriptsize 0.029}&0.237$\pm$\textit{\scriptsize 0.018}\\
        \textbf{Doubao-deepresearch} &0.100$\pm$\textit{\scriptsize 0.200} &0.208$\pm$\textit{\scriptsize 0.037}&0.250$\pm$\textit{\scriptsize 0.113}&0.150$\pm$\textit{\scriptsize 0.122}&0.000$\pm$\textit{\scriptsize 0.000} &0.000$\pm$\textit{\scriptsize 0.000}\\
        \midrule
        \textbf{Autogen} &0.200$\pm$\textit{\scriptsize 0.245} &0.083$\pm$\textit{\scriptsize 0.026}&0.083$\pm$\textit{\scriptsize 0.026}&0.250$\pm$\textit{\scriptsize 0.158} &0.000$\pm$\textit{\scriptsize 0.000} &0.000$\pm$\textit{\scriptsize 0.000} \\
\textbf{MiniMax}&0.300$\pm$\textit{\scriptsize 0.245} &0.216$\pm$\textit{\scriptsize 0.017} &0.216$\pm$\textit{\scriptsize 0.017} & 0.350$\pm$\textit{\scriptsize 0.122}&0.412$\pm$\textit{\scriptsize 0.037} &0.259$\pm$\textit{\scriptsize 0.023} \\
    \midrule
                \textbf{\textsc{CAMEL}} & 0.400$\pm$\textit{\scriptsize 0.200}&0.000$\pm$\textit{\scriptsize 0.000} &0.000$\pm$\textit{\scriptsize 0.000}&0.350$\pm$\textit{\scriptsize 0.122}&0.000$\pm$\textit{\scriptsize 0.000} &0.000$\pm$\textit{\scriptsize 0.000}\\
                \textbf{\textsc{Multi-Agent-Majority}} & 0.300$\pm$\textit{\scriptsize 0.245} &0.000$\pm$\textit{\scriptsize 0.000}&0.000$\pm$\textit{\scriptsize 0.000}&0.300$\pm$\textit{\scriptsize 0.100}&0.000$\pm$\textit{\scriptsize 0.000}&0.000$\pm$\textit{\scriptsize 0.000} \\
            \textbf{\textsc{Multi-Agent Debate}} & 0.400$\pm$\textit{\scriptsize 0.200} &0.286$\pm$\textit{\scriptsize 0.090}&0.092$\pm$\textit{\scriptsize 0.017}&0.500$\pm$\textit{\scriptsize 0.000}&0.600$\pm$\textit{\scriptsize 0.122} &0.104$\pm$\textit{\scriptsize 0.028}\\
            \midrule
        \textbf{\textsc{TLVD}} &\textbf{0.700$\pm$\textit{\scriptsize 0.245}}&\textbf{1.000$\pm$\textit{\scriptsize 0.035}} &\textbf{0.646$\pm$\textit{\scriptsize 0.012}}&\textbf{0.750$\pm$\textit{\scriptsize 0.106}}&\textbf{1.000$\pm$\textit{\scriptsize 0.172}}&\textbf{0.471$\pm$\textit{\scriptsize 0.164}}\\
    \bottomrule
    \end{tabular}}
\end{table}
\begin{table}[t]
    \centering
    \caption{Comparison of performance using different LLMs.}
    \label{table-llms}
    \resizebox{0.44\textwidth}{!}{
    \newcommand{\tabincell}[2]{\begin{tabular}{@{}#1@{}}#2\end{tabular}}
    \begin{tabular}{l||ccc|ccc}
     \toprule    
      &\multicolumn{3}{c|}{\bf WCHSU-Cancer ($n=12$)} &\multicolumn{3}{c}{\bf WCHSU-Pain}\\
      \textbf{Methods} &\textbf{ACC}&\textbf{CAcc}&\textbf{ECit}&\textbf{ACC} &\textbf{CAcc}&\textbf{ECit}\\
     \midrule
     \midrule
        \textbf{Qwen3 235B} &0.833&1.000 &0.385&0.500&1.000&0.316\\
        \textbf{Llama 4 Maverick} &0.500&0.750 &0.115&0.750&1.000 &0.474\\
        \textbf{GPT-oss-120B} &0.667&1.000 &0.269 &1.000&1.000 &0.632\\
        \textbf{DeepSeek-v3} &0.833 &1.000&0.385&0.750 &0.857&0.368\\
    \bottomrule
    \end{tabular}}
\end{table}
\begin{figure}[t] 
\includegraphics[width=0.41\textwidth]{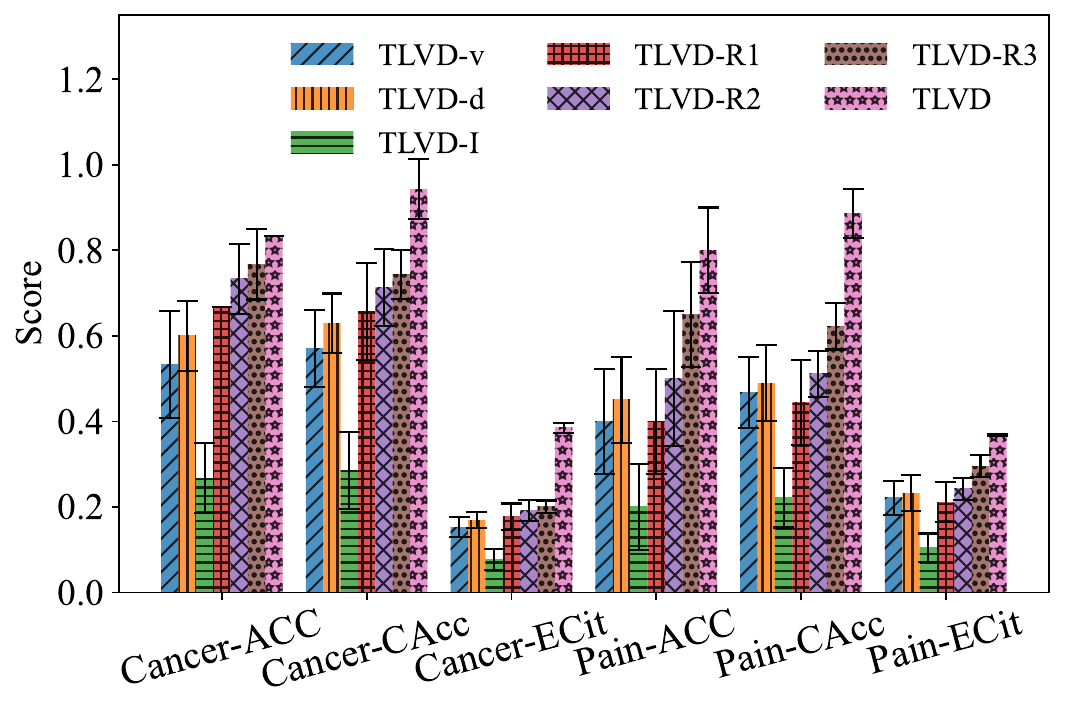}
  \centering
  \caption{Performance of evaluation for ablation study on WCHSU-Cancer (n=12) and WCHSU-Pain.}
  \label{fig:ablation}
  \Description{figure-failure}
\end{figure}
\begin{figure}[t]
	\centering
        \subfloat[WCHSU-Cancer] {\includegraphics[width=0.237\textwidth]{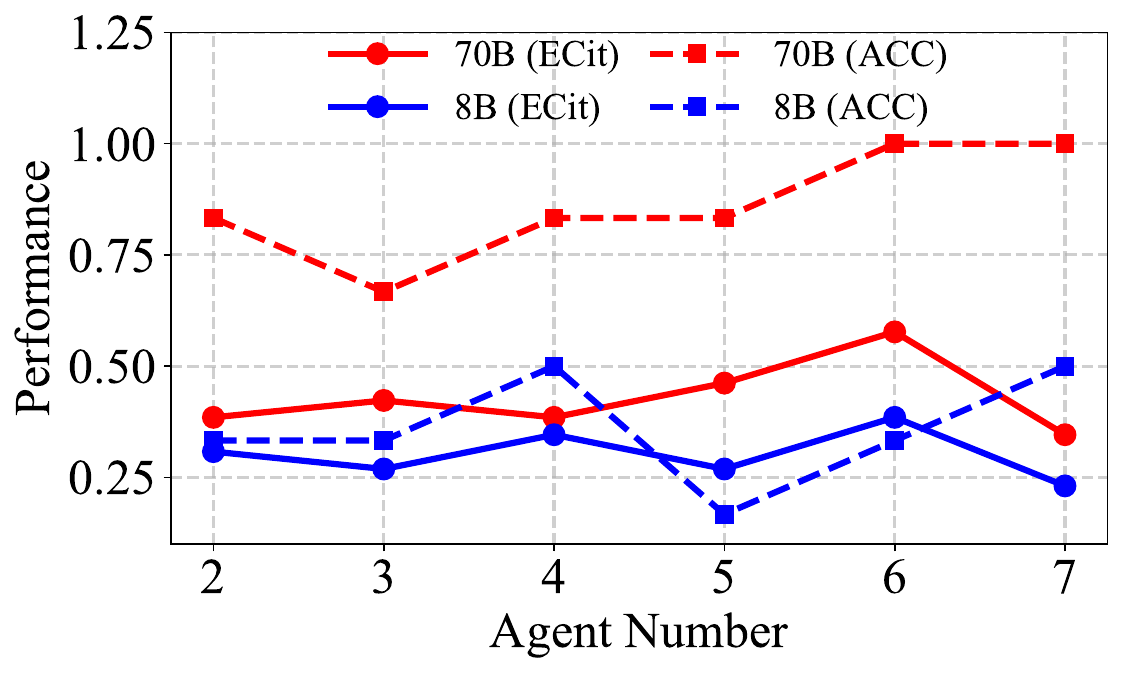}}
	\hfill
	\subfloat[WCHSU-Pain] {\includegraphics[width=0.237\textwidth]{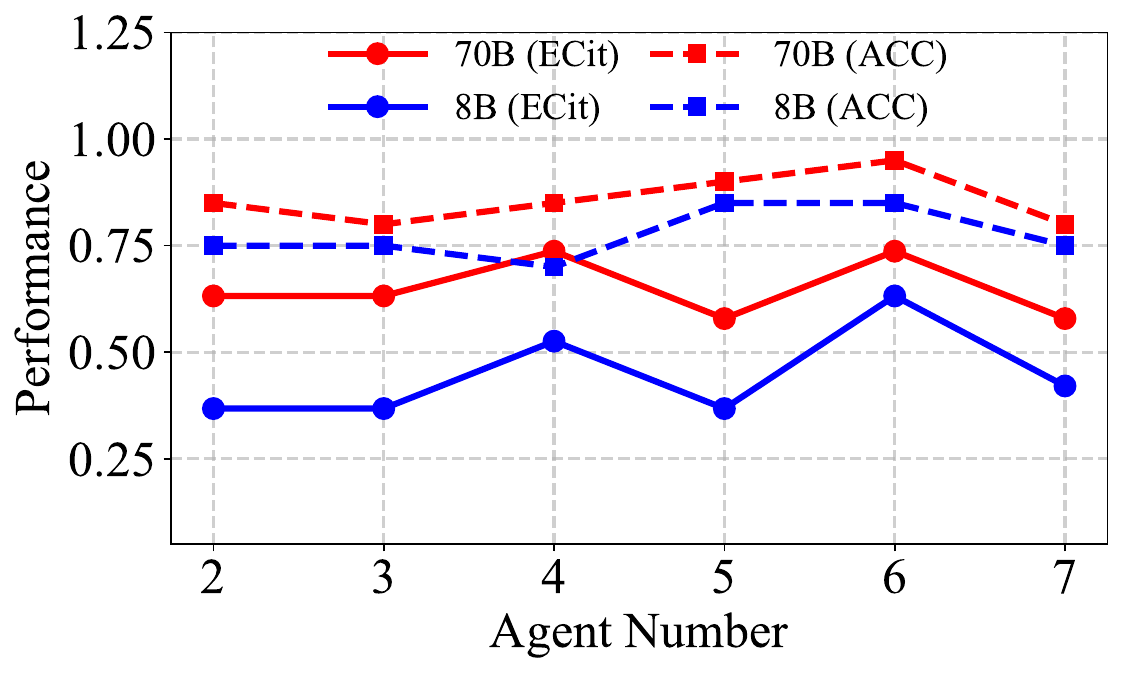}}
    \newline
    \subfloat[WCHSU-Cancer] {\includegraphics[width=0.237\textwidth]{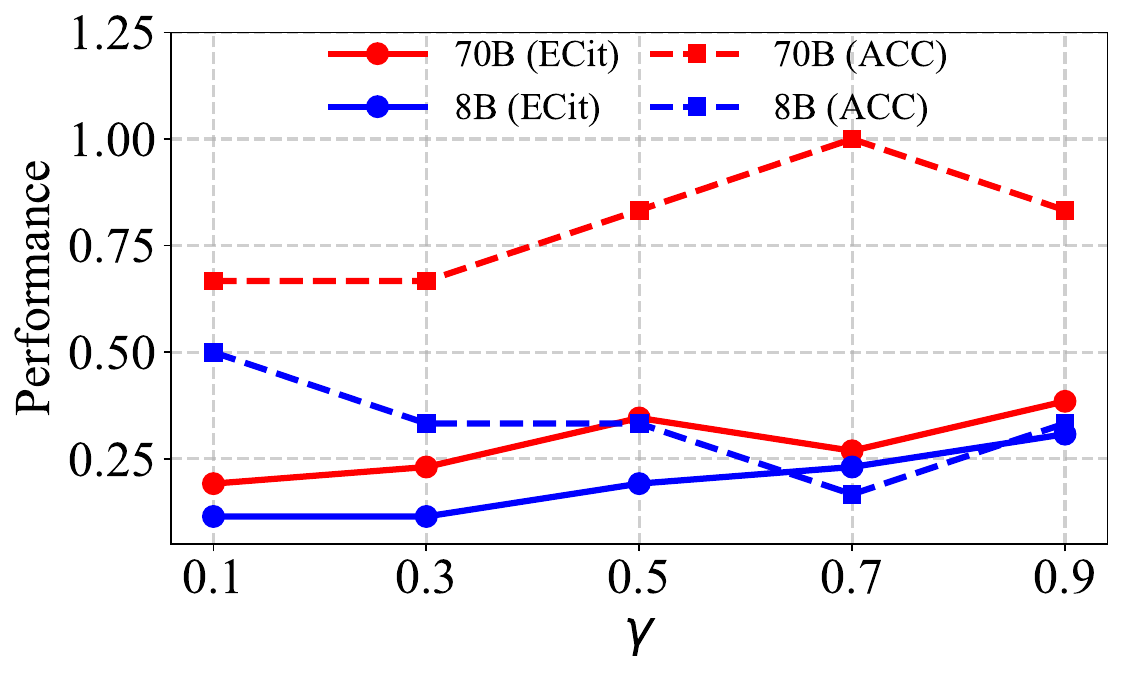}}
	\hfill
	\subfloat[WCHSU-Pain] {\includegraphics[width=0.237\textwidth]{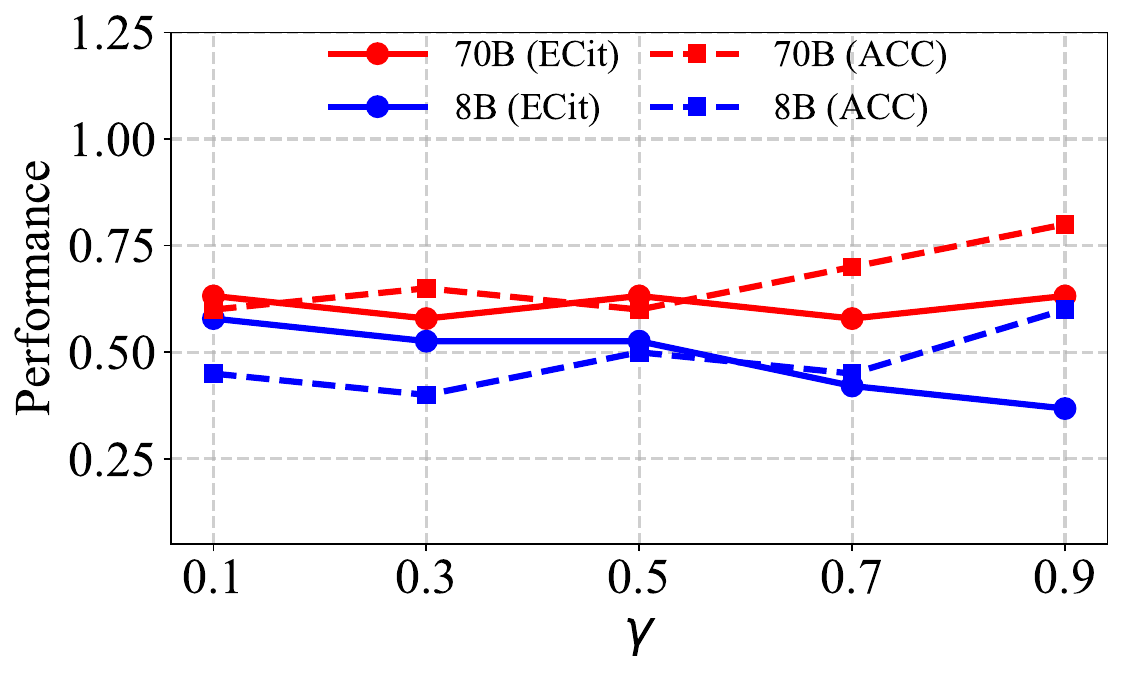}}
	\caption{Performance comparison with different $N$ and $\gamma$.}
 \Description{WCHSU-parameter}
	\label{fig-parameter1}
\end{figure}

\begin{figure}[t]
	\centering
            \subfloat[WCHSU-Cancer] {{\includegraphics[width=0.236\textwidth]{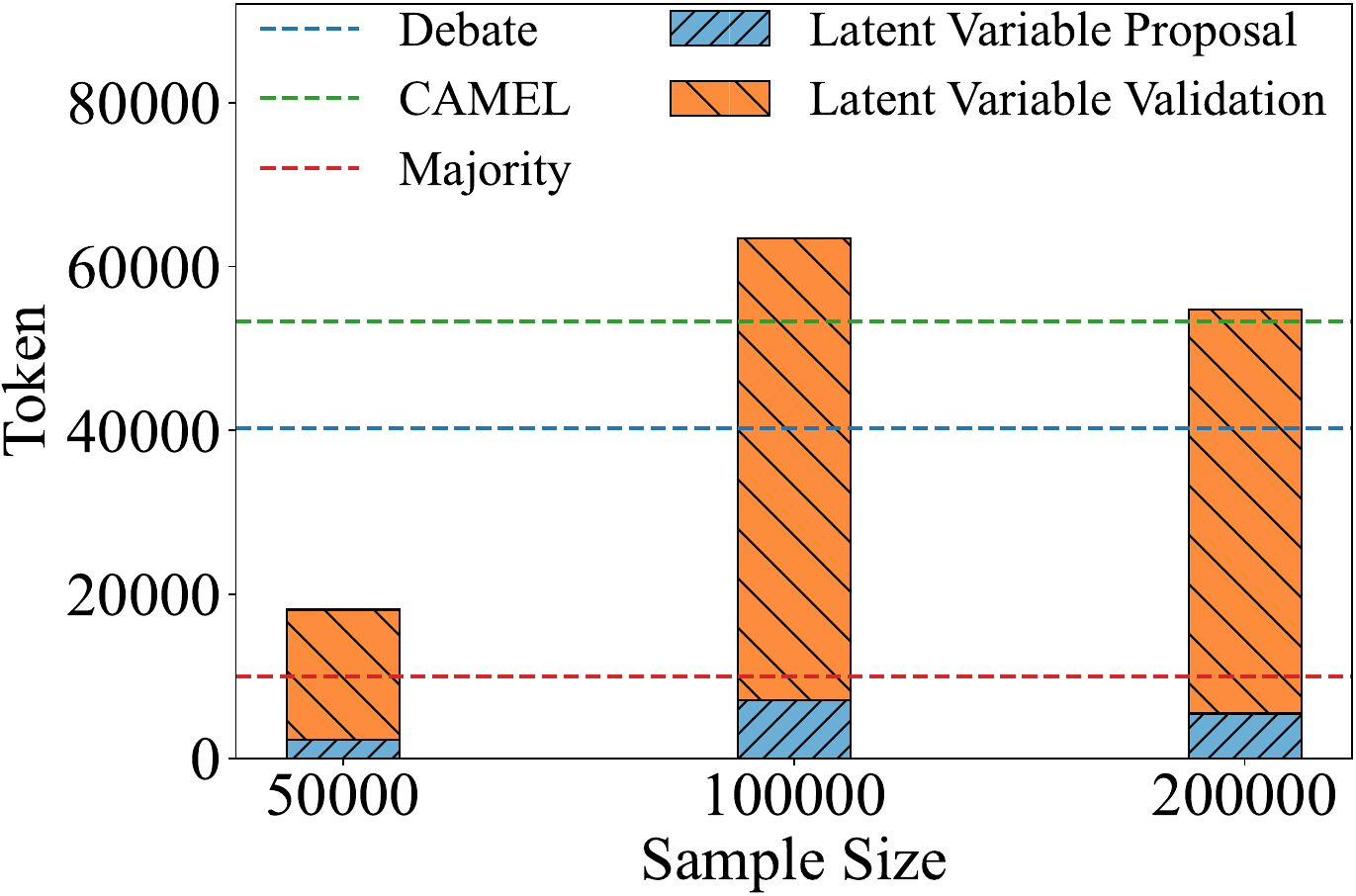}}}
	\hfill
	\subfloat[WCHSU-Pain] {\includegraphics[width=0.236\textwidth]{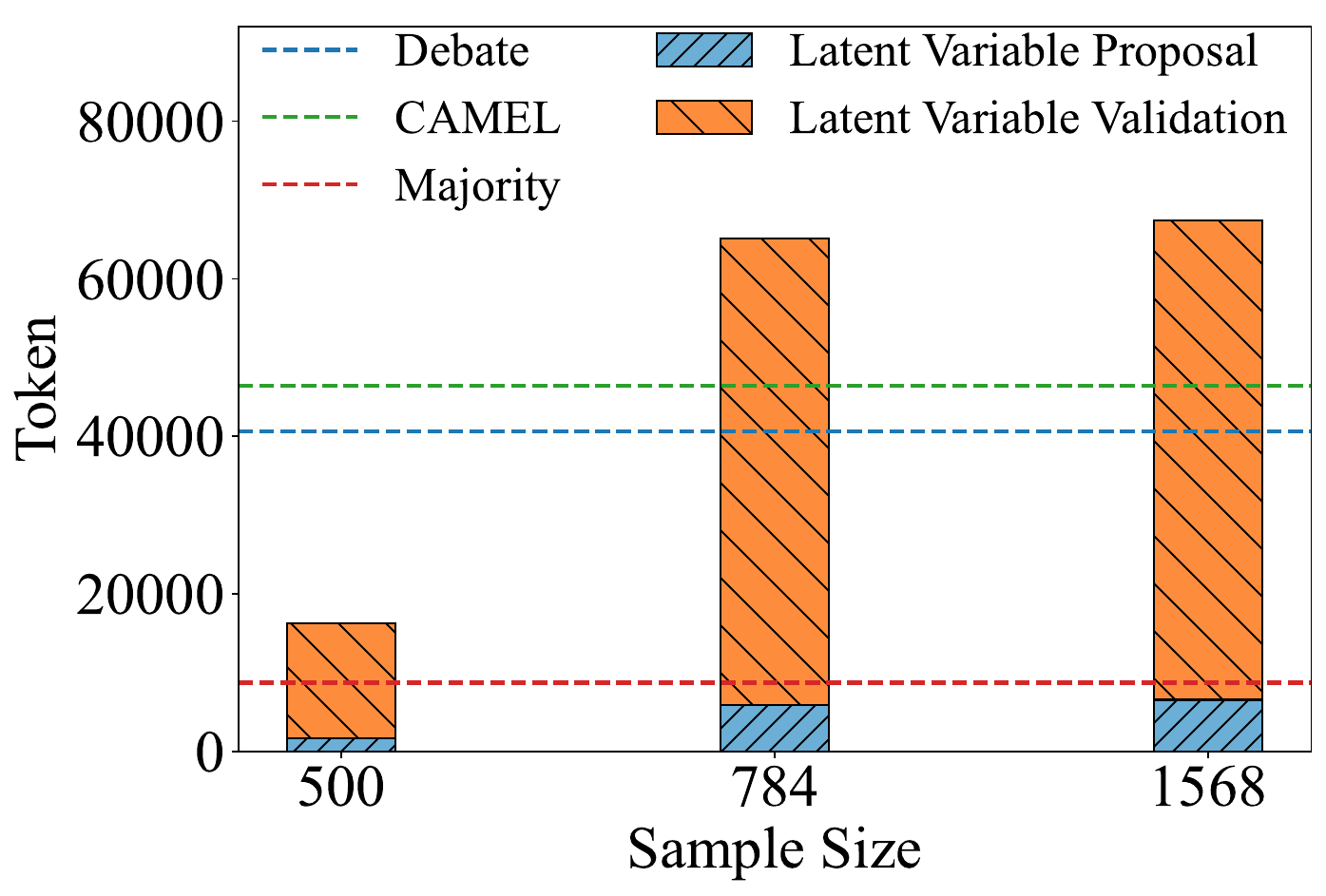}}
	\caption{The cost statistics of our model.}
    \Description{cost}
	\label{cost-parameter}
\end{figure}

\end{document}